\newtheorem{proposition}{Proposition}
\newenvironment{proof}{{\noindent\it Proof} \quad}{\hfill $\square$\par}
\begin{document}

\title{Towards Accurate Post-training Quantization\\ for Reparameterized Models}

\author{Luoming Zhang, Yefei He, Wen Fei, Zhenyu Lou, Weijia Wu, YangWei Ying, and Hong Zhou
\thanks{This work was supported by National Key Research and Development Program of China (2022YFC3602601), and Key Research and Development Program of Zhejiang Province of China (2021C02037)}
\thanks{Luoming Zhang, Yefei He, Zhenyu Lou, Weijia Wu, Hong Zhou are with the Zhejiang Provincial Key Laboratory for Network Multimedia Technologies,
Zhejiang University, Hangzhou 310027, China, and also with the Zhejiang University Embedded System Engineering Research Center, Ministry of Education of China, Zhejiang University, Hangzhou 310027, China (e-mail: zluoming@zju.edu.en; billhe@zju.edu.cn; 11915044@zju.edu.cn; weijiawu@zju.edu.cn; zhouh@mail.bme.zju.edu.cn).}
\thanks{Wen Fei is with the Department of Electronic Engineering, Shanghai Jiao Tong University, Shanghai 200240, China (e-mail: fw.key@sjtu.edu.cn).}}


\markboth{Journal of \LaTeX\ Class Files,~Vol.~14, No.~8, August~2021}%
{Shell \MakeLowercase{\textit{et al.}}: A Sample Article Using IEEEtran.cls for IEEE Journals}


\maketitle

\begin{abstract}
  Model reparameterization is a widely accepted technique for improving inference speed without compromising performance. However, current Post-training Quantization (PTQ) methods often lead to significant accuracy degradation when applied to reparameterized models. This is primarily caused by channel-specific and sample-specific outliers, which appear only at specific samples and channels and impact on the selection of quantization parameters. To address this issue, we propose RepAPQ, a novel framework that preserves the accuracy of quantized reparameterization models. Different from previous frameworks using Mean Squared Error (MSE) as a measurement, we utilize Mean Absolute Error (MAE) to mitigate the influence of outliers on quantization parameters. Our framework comprises two main components: Quantization Protecting Reparameterization and Across-block Calibration. For effective calibration, Quantization Protecting Reparameterization combines multiple branches into a single convolution with an affine layer. During training, the affine layer accelerates convergence and amplifies the output of the convolution to better accommodate samples with outliers. Additionally, Across-block Calibration leverages the measurement of stage output as supervision to address the gradient problem introduced by MAE and enhance the interlayer correlation with quantization parameters. Comprehensive experiments demonstrate the effectiveness of RepAPQ across various models and tasks. Our framework outperforms previous methods by approximately 1\% for 8-bit PTQ and 2\% for 6-bit PTQ, showcasing its superior performance. The code is available at \url{https://github.com/ilur98/DLMC-QUANT}.
\end{abstract}

\begin{IEEEkeywords}
Model Compression, Model Quantization, Model Reparameterization, Efficient Deep Learning
\end{IEEEkeywords}

\section{Introduction}

\IEEEPARstart{I}{n} recent years, deep neural networks have made tremendous advancements in various domains. However, as the depth and breadth of these models increase, so does the number of parameters and computational cost. To address this issue, techniques such as quantization \cite{Esser:LEARNED,nagel:up}, pruning \cite{guo20223d,guo2020model}, low-rank tensor factorization \cite{zhou2017tensor,zhen2022towards}, and knowledge distillation \cite{hinton2015distilling} have been proposed to reduce the computational complexity while preserving model accuracy.

Quantization is a prominent technique for optimizing computational expenses and enhancing the inference efficiency of models. The primary focus of quantization involves reducing calculation precision, generally by converting the model of full-precision (FP32) to 8-bit (INT8) or lower. Numerous empirical investigations indicate that quantizing a model to 8 bits does not degrade its accuracy, and quantization-aware training (QAT)~\cite{bhalgat2020lsq+,Esser:LEARNED} can yield results comparable to full-precision models with only 4 bits. Post-training quantization (PTQ)~\cite{nagel:up,li:brecq} is a popular approach that uses a small set of unlabeled data to calibrate a pre-trained network without the need for additional training or fine-tuning, making it a viable solution for various scenarios.

\begin{figure}[!t]
  \centering
  \centerline{
      \includegraphics[width=1\columnwidth]{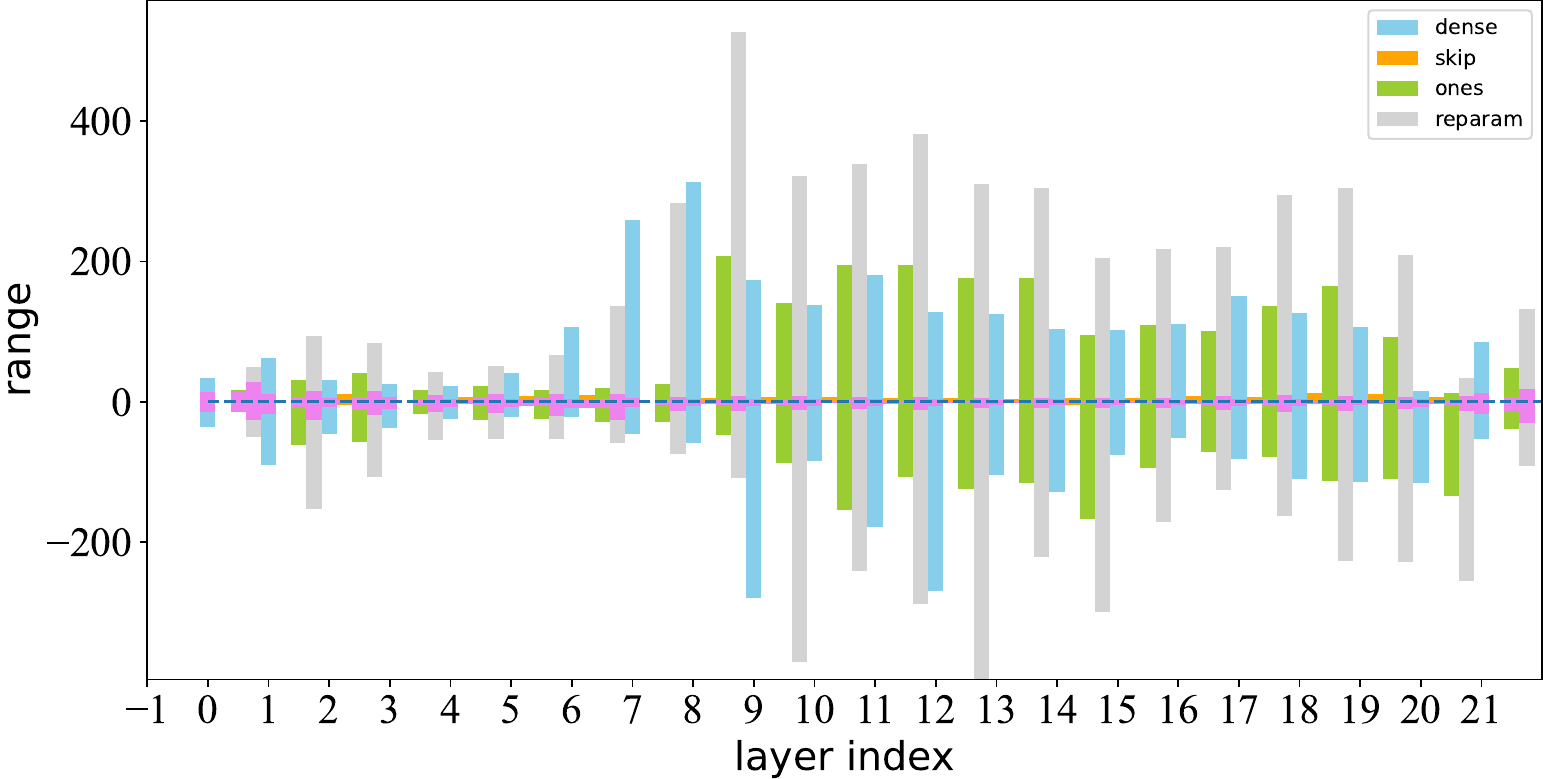}
  }
  \quad
  \centering
  \caption{Box plot for activation to quantize in RepVGGA0. The blue bar is the output of the 3x3 branch, the green bar is the output of the 1x1 branch, the orange bar is the output of the shortcut branch, the gray bar is the fused output, and the violent box is the box with a range of 99\%.}
  \label{fig_range_A0}
  \vskip -0.2in    
\end{figure}

Reparameterization is a novel approach in model design that increases the representational capacity of single convolution by incorporating branches with varying scales and computations. During the inference stage, all branches are linear and can be fused into a single convolution. Notable reparameterization models, such as RepVGG \cite{ding:repvgg} and MobileOne \cite{vasu:improved}, resembling plain models following the style of VGG \cite{simonyan2014very}. In comparison to residual networks under the same parameter budget, plain models are optimized for memory access and exhibit superior speed performance. Reparameterization has been harnessed in object detection and super-resolution tasks to minimize parameter counts while improving performance.

Naive quantization of reparameterized models can result in significant accuracy loss, even for 8-bit quantization, due to the BatchNorm layers~\cite{ioffe2015batch}. This leads to a high variance of reparameterized convolution outputs~\cite{ding:re,chu2022make}, as demonstrated by the activation range plot of RepVGGA0 in Fig.~\ref{fig_range_A0}. While the majority of values are small, there exist outliers that are significantly larger than the majority of values. For example, in the 9th layer, with a maximum value of 426.25, the 99\% threshold is 6.66. Additionally, the aggregate output is larger than the output of each branch. 
The design of Post-add BatchNorm layers is beneficial for accuracy but introduces larger outliers, making quantization more challenging. When using naive quantization methods for activation with these large outliers, the quantized values tend to cluster at the smallest quantized values. This is because the ratio of the maximum value to the median value can exceed 256 (the range of 8-bit quantization) in various layers, resulting in many values being mapped to a single quantized value. Consequently, it reduces the information entropy of the activation. Moreover, most existing quantization methods initialize parameters by minimizing the mean squared error (MSE) of each layer's output. However, the MSE between quantized values and full-precision values is larger for outliers compared to normal values, as the large errors are amplified by the quadratic function. This means that using MSE as a measurement fails to find the optimal quantization parameters for accurate quantized models. Additionally, distillation-based methods that utilize MSE for block output tend to introduce a large gradient for channels with outliers, leading to a significant increase in the gradient of weight scales. Consequently, existing quantization methods struggle to obtain accurate quantized models.

After conducting our analysis, we made several improvements to address the challenges posed by outliers in reparameterized models during quantization. We introduced a novel method called Quantization Protecting Reparameterization, which involves inserting an affine layer after convolution during the calibration process. The affine layer helps to amplify the features to accommodate samples with outliers better and accelerate the convergence of quantized models. Additionally, we developed the Across Block Calibration technique to overcome the gradient problem and establish interlayer correlations. By employing these methods, we successfully extended the capabilities of post-training quantization for reparameterized models, achieving accurate results even with 6-bit quantization.

This paper contributes in the following ways:
\begin{itemize}
  \item Review and identification: The paper provides a comprehensive review of quantized reparameterized models and identifies large activation outliers as a significant factor contributing to accuracy degradation. It highlights the limitations of existing techniques that aim to mitigate this issue by minimizing Mean Squared Error (MSE) during initialization and per-block calibration, emphasizing their failure to find optimal quantization parameters in the presence of outliers.
  \item Quantization Protecting Reparameterization (QPRep): To address the impact of large outliers during calibration, the paper proposes a novel method called QPRep. This method introduces a channel-wise affine layer following reparameterized convolutions. The affine layer adaptively amplifies channel features to better fit samples with outliers, thereby preserving accuracy during quantization.
  \item Across-block Calibration (ABC): The paper introduces ABC as an alternative calibration paradigm to per-block calibration using MSE. ABC utilizes Mean Absolute Error (MAE) between the current layer output and the stage output as a measure for distillation. By optimizing a stage that includes multiple blocks with ABC, more precise stage outputs can be achieved.
  \item Extensive evaluation: The proposed techniques are extensively evaluated on various model variants for image classification and object detection tasks. Through these evaluations, the paper demonstrates that the proposed methods outperform existing Post-training Quantization methods, resulting in significant performance improvements.
\end{itemize}

\section{Related Work} \label{relatedwork}
\subsection{Reparameterization}
Reparameterization has proven to be a powerful technique in efficient network design. For instance, DiracNet \cite{zagoruyko:diracnets} developed a Dirac parameterization strategy to train deep-plan networks without the need for explicit skip-connections. Ding \cite{ding:acnet} introduced an asymmetric convolution block (ACB) to replace standard square-kernel layers, employing 1D asymmetric convolution to enhance the square convolution kernels and fusing asymmetric convolutions during inference. RepVGG \cite{ding:repvgg} aimed to construct a VGG-like inference network using only $3 \times 3$ convolution and ReLU, and utilized a multi-branch topology containing a $1 \times 1$ convolution, a $3 \times 3$ convolution, and an identity mapping during training.
Another example is the work of Ding et al. \cite{ding:diverse}, which presented six reparameterization paradigms and devised a new method to enhance accuracy in response to the ACB structure. OREPA \cite{hu2022online} featured a scaling layer to reduce the substantial training overhead by compressing the complex training-time block into a single convolution. RepLKNet \cite{ding:scaling} and Repmlp \cite{ding:repmlp} developed large models using reparameterization modules.
Incorporating reparameterization into the efficient model design, \cite{vasu:improved} proposed the MobileOne model, which achieved an inference time under 1 ms. In object detection, YoLov6 \cite{li2022yolov6} and YoLov7 \cite{wang2022yolov7} introduced reparameterization, which resulted in better accuracy than other detectors with similar inference speeds.
Furthermore, super-resolution networks\cite{zhang:edge,wang:edge} utilized fixed window functions to enhance sensitivity to edge features.

RepOpt \cite{ding:re} and QARepVGG \cite{chu2022make} effort to reduce the weight variance, which introduces additional loss to full precision models and requires training a new model to replace the original one. However, these additional constraints will limit the accuracy of the model. Therefore, there is an urgent need to find a method to obtain accurate quantized reparameterized models using vanilla reparameterized models.

\subsection{Model Quantization}
Quantization-aware training (QAT) \cite{jacob:quantization,Esser:LEARNED,fan:training} involves re-training to prevent accuracy degradation during the quantization process, achieving accuracy similar to full-precision models with fewer bits. For those who do not want to undertake QAT, Post-training Quantization (PTQ) is an alternative approach that requires only a small amount of unlabeled data for quantization. For PTQ, ACIQ \cite{banner:aciq} computes the optimal clipping range and the channel-wise, enabling the quantization of models into 4 bits without significant accuracy loss. OMSE \cite{choukroun:low} eliminates channel-wise activation quantization and proposes minimum mean-square error (MSE) quantization for greater accuracy improvements. 
DFQ \cite{nagel:data} presents a data-free PTQ framework that achieves accuracy comparable to full precision, while Bitsplit \cite{wang2020towards} relies on bit-split and stitching to obtain accurate quantized models. EasyQuant \cite{wu2020easyquant} introduces scale optimization to weights and activations, while AdaRound \cite{nagel:up} demonstrates that round-to-nearest is not the optimal solution and proposes an adaptive rounding method for weight quantization. 
AdaQuant \cite{hubara:accurate} proposes a general method that permits the adjustment of both quantization parameters and weights as required. Additionally, BRECQ \cite{li:brecq} demonstrates that block-wise minimum MSE optimization is the superior approach for PTQ. It employs STE training \cite{bengio2013estimating} for activation quantization and uses the Fisher Information Matrix to evaluate the importance of each layer in mixed precision quantization. 
QDrop \cite{wei2022qdrop} confirms that integrating activation quantization into PTQ leads to better final accuracy. OBC \cite{frantar2022optimal} proposes a new PTQ framework and model pruning based on the Optimal Brain Surgeon framework \cite{hassibi1993optimal}.

\subsection{Outliers Solutions}
In transformer-based language models, such as Opt \cite{zhang2022opt} and Bloom \cite{workshop2022bloom}, the presence of outliers poses challenges to the quantization process. To address this issue, a method called Outliers Suppression \cite{wei2022outlier} has been proposed. This method focuses on finding an optimal clip range for quantization, thereby mitigating the impact of outliers and pushing the limit of Post-training Quantization for BERT to 6-bit. Another approach, LLM.int8(), introduces a novel computing architecture that specifically targets outliers. By handling outliers separately, this method aims to improve the quantization process in transformer-based language models. SmoothQuant \cite{xiao2022smoothquant} is another technique proposed for LLMs. It introduces channel smooth parameters to balance the quantization difficulty between weight and activation. By effectively managing the quantization process, SmoothQuant achieves comparable accuracy to full precision models when using tensor-wise INT8 Post-training quantization on LLMs. These methods demonstrate innovative approaches to address the outliers' problem in transformer-based language models and push the limits of Post-training Quantization, resulting in improved quantization accuracy and efficiency for these models. 

The previous methods designed for addressing the outliers' issue in language models may not be directly applicable to reparameterized models. The nature of outliers in language models, which are token-specific, differs from the outliers observed in convolutional networks, which are sample-specific. In convolutional networks, each input contains only one sample and is processed independently. For instance, the Outliers Suppression method utilizes a token-wise clipping strategy for activation quantization in language models. However, employing such a strategy in convolutional neural networks would require dynamic quantization, which is often not hardware-friendly. If we were to use a sample-wise clipping strategy for CNN models, the quantization scheme for activation would become dynamic, necessitating the computation of feature distributions for each sample and layer. Unfortunately, obtaining these distributions for quantization purposes would require significant time and computational resources, potentially outweighing the time benefits gained from quantization. Therefore, it is essential to develop specialized techniques that are tailored to the characteristics of reparameterized models, considering the unique nature of activation outliers in these models.
\section{Problem Analysis}
For reparameterized models, the standard 8-bit PTQ method often leads to significant accuracy degradation. Previous studies such as RepOpt and QARepVGG have highlighted that outliers in these models are primarily caused by the BatchNorm layers. The running mean pushes the mean of distribution into 0 when the running variance increases the variance of distribution and directly affects the quantized accuracy \cite{chu2022make}. Notably, the use of pre-add BatchNorm, as opposed to post-add BatchNorm, proves more advantageous in preserving accuracy \cite{ding:re}. With pre-add BatchNorm, the BatchNorm modules do not affect the output, and the sum of branches remains unnormalized, potentially resulting in larger outliers. In the Appendix \ref{CompNorm}, we provide a comparison of the quantized accuracy and full precision accuracy for RepVGGA0 models with post-add BatchNorm and pre-add BatchNorm. The results demonstrate that post-add BatchNorm effectively suppresses outliers and allows for the quantization of reparameterized models with minimal accuracy loss.

\subsection{Outliers Analysis}
To gain deeper insights into the relationship between outliers and quantization performance, we conducted an analysis on RepVGGB0. Specifically, we focused on the first, fourth, and eighth blocks in stage 3 and examined the maximum and minimum values across different samples at Fig. \ref{fig_FM} (a)-(c). We observed that the maximum and minimum values vary across samples, indicating the presence of outliers. Interestingly, samples that exhibit outliers in the first block tend to have outliers in the subsequent blocks as well. However, the values of outliers differ among the layers. Additionally, when considering all samples with outliers, we found that the scaling of each sample is also different. For instance, the outliers of the 26th sample are amplified after several convolution layers, while the outliers of the 29th sample are narrowed down during the same convolution layers. This suggests that the model behaves differently for different samples.

To gain further insights into the outliers in different samples, we conducted an analysis by plotting the maximum values per channel for three specific samples (2nd, 26th, and 30th) in the same layer at Fig. \ref{fig_FM} (d)-(f). We observed that while the values of outliers differ, they consistently appear at fixed channels. Moreover, the ratios between outliers in each sample are similar, but the ratio of outliers to normal values varies. This poses a challenge for methods like SmoothQuant, which aims to balance the quantization difficulties between activation and weights by introducing scaling parameters. Since the ratios of outliers to regular values differ among samples, a general scaling parameter may not be suitable for all samples. Additionally, the fusion of these scaling parameters is only feasible when the activation is linear.
\begin{figure}[!t]
  \begin{center}
  \centerline{\includegraphics[width=1\columnwidth]{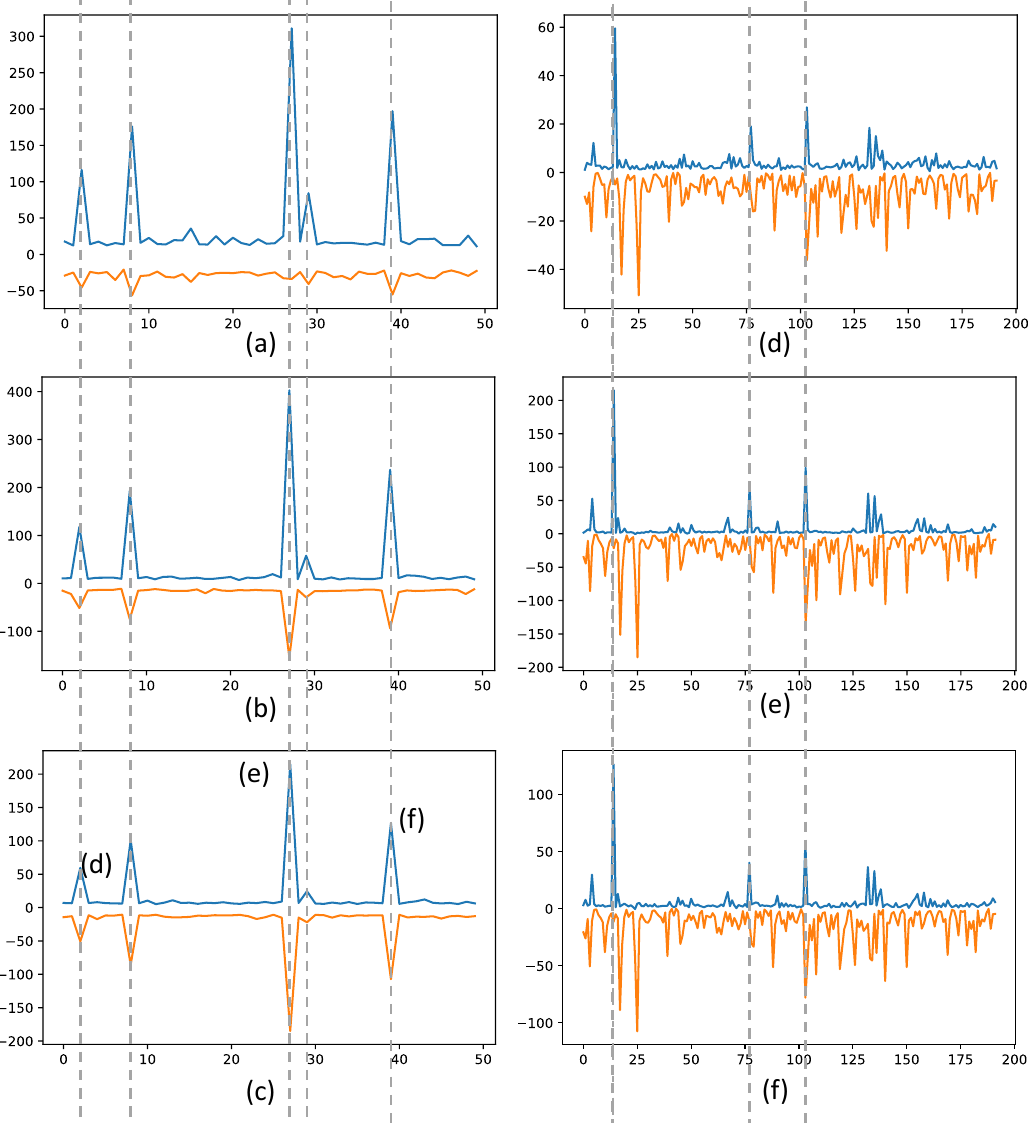}}
  \caption{(a) (b) (c) is maximums and minimums of the different samples in same layer. (d) (e) (f) is maximums and minimums of different channels in different samples of same layers. Orange presents the minimums and blue means the maximums.}
  \label{fig_FM}
  \end{center}
  \vskip -0.2in
\end{figure}

The presence of outliers in language models (LLMs) and convolutional networks (CNNs) manifests similarly, but there are important differences due to the input nature of these models. LLMs take token sequences as input, while CNNs only require a single input. This distinction allows the impact of hard-to-quantize tokens to be mitigated in LLMs when they are concatenated with easier-to-quantize tokens. Outliers Suppression proposes an aggressive clipping strategy that effectively disregards outliers of less relevant tokens and clips them to small values. The study found that even with a clipping ratio as high as 0.3\%, the accuracy of LLMs remains largely unaffected.

To highlight the differences between LLMs and CNNs, we conducted a comparative experiment on reparameterized convolution networks and NLP models, as depicted in 
Fig. \ref{fig_A0_Acc}. The results demonstrate that the accuracy of most reparameterized models deteriorates significantly with a 20\% clipping strategy, while RoBERTA\_QNLI only experiences a minor 0.5\% accuracy drop even with a 5\% clipping strategy (equivalent to a 20\% clipping strategy for most reparameterized models). There is a 2-bit expression gap observed in quantization. With more aggressive clipping strategies, reparameterized models suffer substantial accuracy losses even with a 1\% clipping strategy. Notably, MobileOneS0 loses representational ability under a 1\% clipping strategy. Unlike LLMs, outliers in reparameterized models can be suppressed to some extent, but they still have an impact on accuracy. Hard-to-quantize samples are particularly affected by aggressive clipping strategies. However, a milder clipping policy still results in significant outliers, leading to accuracy degradation in static quantization.

\begin{figure}[!tb]
  \begin{center}
  \centerline{\includegraphics[width=1\columnwidth]{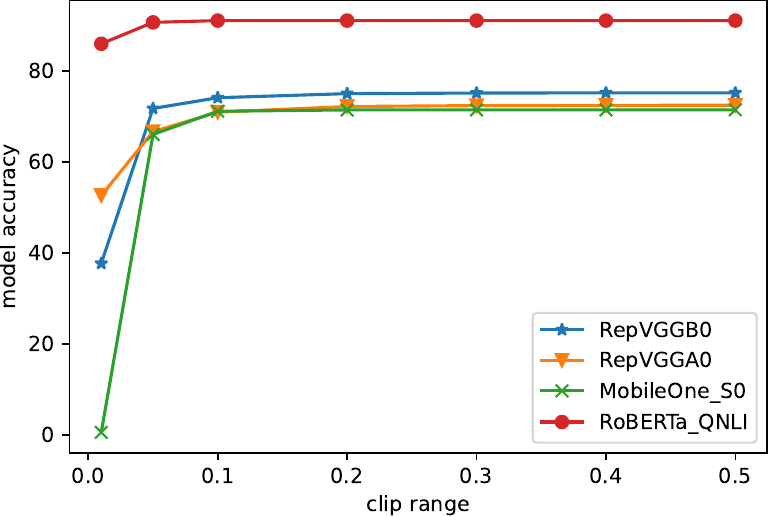}}
  \caption{To detect the impact of clipping the outliers, we set the 50 times bigger than mean as outliers. For outliers, we enumerate the value to cut the clip ratio to cut the activation on several reparameterized models. The clipping reuslts of RoBERTA is from Outlier Suppression \cite{wei2022outlier}.}
  \label{fig_A0_Acc}
  \end{center}
  \vskip -0.2in
\end{figure}

We extended our investigation to analyze the impact of different clipping strategies. Specifically, we employed a 0.5 clipping strategy and quantized models using the max-min quantization method. The resulting boxplot for RepVGGB0 is depicted in Fig. \ref{fig_range}. The findings revealed that the clipping strategy effectively limits the presence of outliers. Notably, setting a clipping range in a preceding layer can attenuate the outliers' phenomenon in subsequent layers. This observation suggests that the outliers in the output are often a consequence of outliers in the input. However, it should be noted that in certain layers, the application of clipping strategies can introduce outliers in the output.

\begin{proposition}
  For a given input that follows a Gaussian mixture distribution $X \sim \sum_{K=1}^{2} \alpha_{x_K}\mathcal{N}(\mu_{x_K}, \sigma_{x_K}^2)$ and a weight of a convolution layer that follows a Gaussian distribution $W \sim \mathcal{N}(\mu_{w}, \sigma_{w}^2)$, the output of the convolution can be obtained as $O = \text{Conv}(W, X)$. The output also follows a mixture distribution $O \sim \sum_{K=1}^{2} \alpha_{o_K}f_{o_K}(x)$, where $\sigma_{o_2} / \sigma_{o_1} < \sigma_{x_2} / \sigma_{x_1}$.
\end{proposition}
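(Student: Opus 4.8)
The plan is to exploit the linearity of convolution together with the independence of the weight and the input, reducing everything to a variance computation for each mixture component. First I would make the mixture structure explicit by introducing a latent component indicator $Z\in\{1,2\}$ with $\Pr(Z=K)=\alpha_{x_K}$; conditioned on $Z=K$ the receptive-field entries are i.i.d.\ $\mathcal{N}(\mu_{x_K},\sigma_{x_K}^2)$, and since a single convolution output is the linear form $O=\sum_{i=1}^{n}W_iX_i$ (with $n$ the receptive-field size and any bias irrelevant to the spread), the conditional law of $O$ given $Z=K$ is some density $f_{o_K}$. Because the latent indicator is untouched by the convolution, the marginal of $O$ is again a two-component mixture $\sum_K\alpha_{o_K}f_{o_K}$ with $\alpha_{o_K}=\alpha_{x_K}$, which settles the mixture claim without needing $f_{o_K}$ in closed form.

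Next I would compute the conditional variance $\sigma_{o_K}^2$. Using independence of $W_i$ and $X_i$ and the product-variance identity $\mathrm{Var}(W_iX_i)=(\mu_w^2+\sigma_w^2)(\mu_{x_K}^2+\sigma_{x_K}^2)-\mu_w^2\mu_{x_K}^2=\mu_w^2\sigma_{x_K}^2+\sigma_w^2\mu_{x_K}^2+\sigma_w^2\sigma_{x_K}^2$, summing $n$ independent terms gives $\sigma_{o_K}^2=n\,(\mu_w^2\sigma_{x_K}^2+\sigma_w^2\mu_{x_K}^2+\sigma_w^2\sigma_{x_K}^2)$. Forming the ratio, the factor $n$ cancels and the target $\sigma_{o_2}/\sigma_{o_1}<\sigma_{x_2}/\sigma_{x_1}$ is equivalent, after clearing the positive denominators and cancelling common terms, to $\sigma_{x_1}^2\mu_{x_2}^2<\sigma_{x_2}^2\mu_{x_1}^2$, i.e.\ to $\mu_{x_1}^2/\sigma_{x_1}^2>\mu_{x_2}^2/\sigma_{x_2}^2$. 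Thus the whole statement collapses to an ordering of the per-component signal-to-noise ratios.

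The main obstacle is that this reduced inequality is \emph{not} unconditional: it holds precisely when the outlier component (index $2$, the one with the larger spread) has the smaller mean-to-standard-deviation ratio, so I would make the modelling assumption explicit — namely that the two components share a common mean (consistent with the BatchNorm centering discussed above, which drives the running mean toward zero) while $\sigma_{x_2}>\sigma_{x_1}$. Under $\mu_{x_1}=\mu_{x_2}$ the condition $\mu_{x_1}^2/\sigma_{x_1}^2>\mu_{x_2}^2/\sigma_{x_2}^2$ reduces exactly to $\sigma_{x_2}^2>\sigma_{x_1}^2$, the defining property of the outlier component, so the inequality follows. I would close with the interpretation: writing $\sigma_{o_K}^2=n[(\mu_w^2+\sigma_w^2)\sigma_{x_K}^2+\sigma_w^2\mu_{x_K}^2]$, the additive term $\sigma_w^2\mu_{x_K}^2$ is identical across components when the means agree, so it inflates the smaller-variance component relatively more, pulling the spread ratio toward $1$ and explaining the empirically observed narrowing of outliers through successive convolutions.
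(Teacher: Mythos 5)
Your proof is correct and follows essentially the same route as the paper's: both reduce the claim to the product-variance identity $\mathrm{Var}(WX)=(\mu_w^2+\sigma_w^2)(\mu_x^2+\sigma_x^2)-\mu_w^2\mu_x^2$ applied componentwise under the same equal-means assumption (which the paper imposes at the outset ``for computational simplicity''), yielding a ratio of the form $(t^2A+B)/(A+B)<t^2$. Your version is slightly more careful in isolating the exact condition $\mu_{x_1}^2/\sigma_{x_1}^2>\mu_{x_2}^2/\sigma_{x_2}^2$ under which the ratio shrinks (both arguments silently need $\mu_x\neq 0$ and $\sigma_w\neq 0$ for strictness) and in treating the full receptive-field sum rather than a single product, but the substance is identical.
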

\begin{proof}
  Since the outliers are mainly affected by variance parameters, we can assume that the outliers are scalar errors \cite{dixon1953processing}. For computational simplicity, we set the means of each distribution in the Gaussian mixture distribution to be the same. Let $X$ follow the distribution p$\mathcal{N}(\mu_{x}, \sigma_{x}^2) + q\mathcal{N}(\mu_{x}, t^2\sigma_{x}^2)$. The probability density function $f_X(x)$ can be written as::
\begin{equation}
  \begin{aligned}
    f_X(x) &= pf_{X_1}(x)+qf_{X_2}(x) \\
    &= \frac{p}{\sigma_{x}\sqrt{2\pi}} e^{-\frac{(x-\mu_{x})^2}{2\sigma_{x}^2}} + \frac{q}{t\sigma_{x}\sqrt{2\pi}} e^{-\frac{(x-\mu_{x})^2}{2t^2\sigma_{x}^2}}
  \end{aligned}
\end{equation} 

Assuming $W$ conforms $N(0, \sigma_{w}^2)$. The size of input is $(C_{in}, H_{in}, K_{in})$ and the kernel size of the weight is $(C_{out}, C_{in}, K_{h}, K_{out})$. If we use the im2col method for convolution, the convolution operation can be expressed as a matrix multiplication. Let $\dot{X}_{(C_{in}, K_{h}\times K_{w}, H_{out}\times W_{out})}$ and $\dot{W}_{(C_{out}, C_{in}, K_{h}\times K_{w})}$ be sample matrices of $X$ and $W$.
The output of convolution can be calculated as follows:
\begin{equation}
  \begin{aligned}
    &O_{(C_{out}, H_{out} \times W_{out})} = \dot{W} \times \dot{X} = \sum\limits_{i=0}^{C_{in}}\sum\limits_{j=0}^{K_{h}\times K_{w}} O'\\
    &= \sum\limits_{i=0}^{C_{in}}\sum\limits_{j=0}^{K_{h}\times K_{w}} \begin{bmatrix}
        \begin{array}{cccc}
           w_{0ij}x_{ij0} & w_{0ij}x_{ij1} & \cdots & w_{0ij}x_{ijm} \\
           w_{1ij}x_{ij0} & w_{1ij}x_{ij1} & \cdots & w_{1ij}x_{ijm} \\
          \vdots & \vdots & \ddots & \vdots \\
          w_{nij}x_{ij0} & w_{nij}x_{ij1} & \cdots & w_{nij}x_{ijm}
        \end{array}
    \end{bmatrix}
  \end{aligned}
\end{equation}
As the $\dot{X}$ and $\dot{W}$ are sample matrices, $x \in X$ and $w \in  W$. For the matrix $O'$, the probability distribution function can be easily obtained as:
\begin{equation}
  \begin{aligned}
    f_O'(u) &= \int_{-\infty}^{+\infty}f_X(x)f_W(u/x) dx\\ 
    &= p\int_{-\infty}^{+\infty}f_{X_1}(x)f_W(u/x) dx + qf_{X_2}(x)f_W(u/x) dx \\ 
    &= pf_{O'_1}(x) + qf_{O'_2}(x)
  \end{aligned}
\end{equation}

The output can also be expressed in the form of a mixture distribution. We can calculate the variances for each component as:

\begin{equation}
  \begin{aligned}
    Var(O'_{1}) &= (\sigma_{x}^2 + \mu_{x}^2)(\sigma_{w}^2 + \mu_{w}^2) - \mu_{x}^2\mu_{w}^2 \\
    Var(O'_{2}) &= (t^2\sigma_{x}^2 + \mu_{x}^2)(\sigma_{w}^2 + \mu_{w}^2) - \mu_{x}^2\mu_{w}^2
  \end{aligned}
\end{equation}
From this, we can easily calculate the ratio of variances as:
\begin{equation}
  \begin{aligned}
    \frac{Var(O'_{2})}{Var(O'_{1})} &= \frac{t^2\sigma_{x}^2\sigma_{w}^2 + t^2\sigma_{x}^2\mu_{w}^2 + \mu_{x}^2\sigma_{w}^2}{\sigma_{x}^2\sigma_{w}^2 + \sigma_{x}^2\mu_{w}^2 + \mu_{x}^2\sigma_{w}^2} \\
    &= t^2 + \frac{\mu_{x}^2\sigma_{w}^2-t^2\mu_{x}^2\sigma_{w}^2}{\sigma_{x}^2\sigma_{w}^2 + \sigma_{x}^2\mu_{w}^2 + \mu_{x}^2\sigma_{w}^2}
  \end{aligned}
\end{equation}
Since $t > 1$, we have $1-t^2<0$. Thus, the ratio of variances is smaller than $t^2$.
\end{proof}

In Fig. \ref{fig_range} (a), it can be observed that unless a layer introduces outliers itself, the outliers of each subsequent layer tend to reduce sequentially. Taking the layers in stage 1 (2nd - 5th layers) as an example, the outliers show a decreasing trend. 

\begin{figure}[!b]
  \centering
  \subfloat[full precision activation range]{
      \includegraphics[width=1\columnwidth]{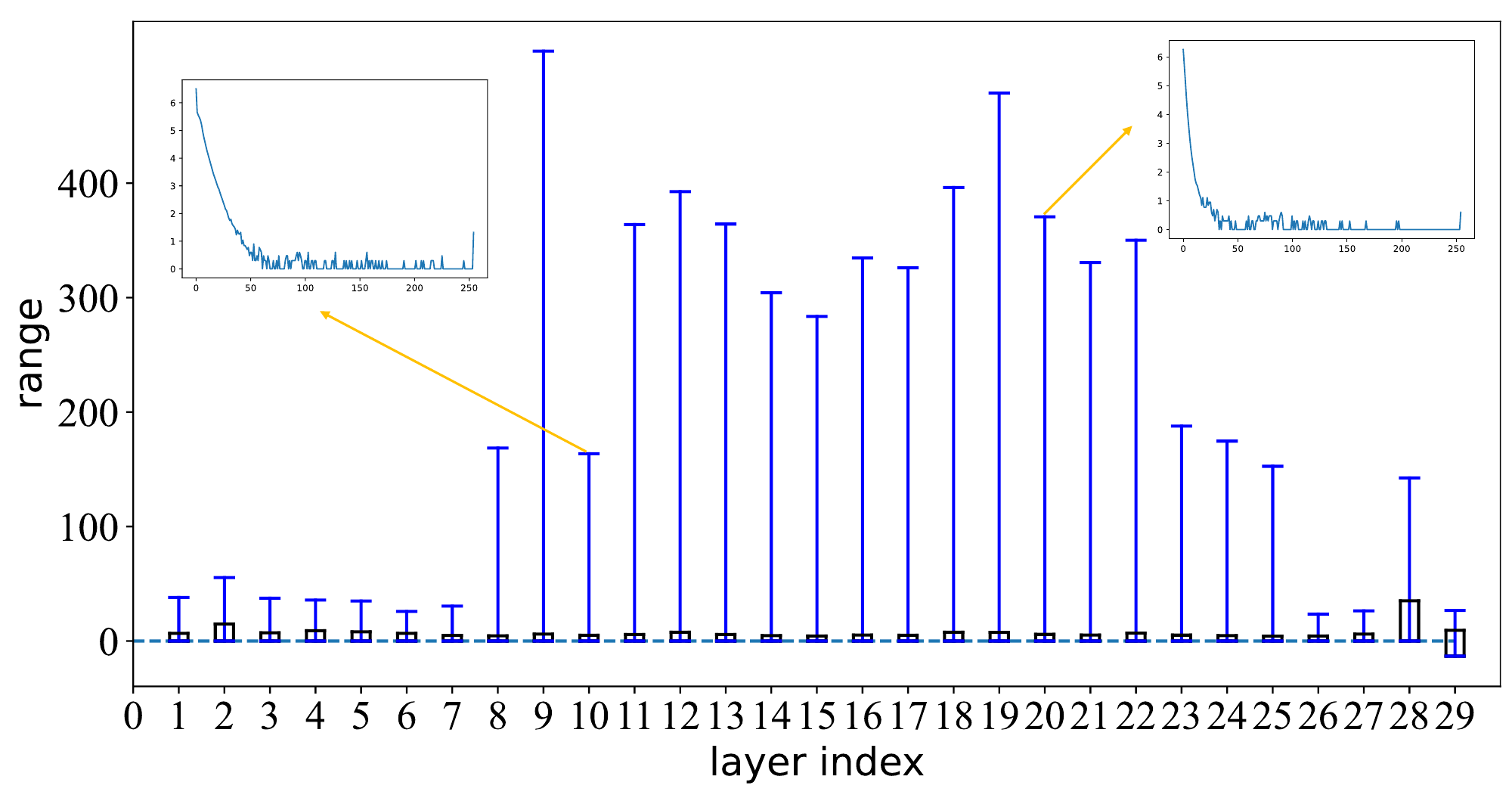}
  }
  \quad
  \subfloat[activation range quantized with 0.5 times maximums]{
      \includegraphics[width=1\columnwidth]{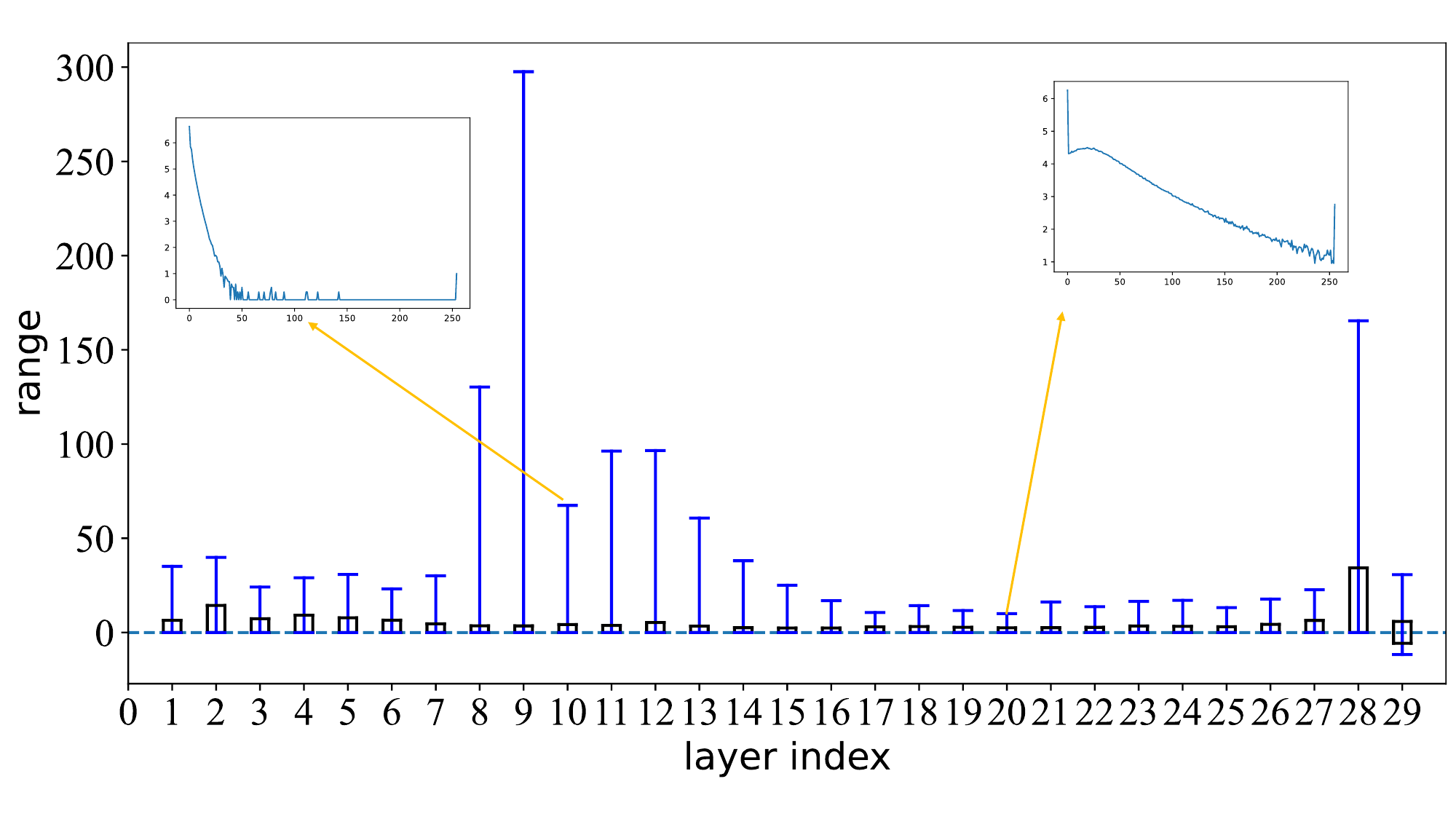}
  }
  \quad
  \centering
  \caption{Boxplot of full precision activation range and initial quantized activation range. We increase the range of boxes from 75\% to 99\%. For the activation range, we do not count the number of 0 values, because its size does not affect quantization. Subplots are the distribution of quantized activation for the 10th and 20th layers. The quantized activation is quantized with initial quantization parameters. }
  \label{fig_range}    
\end{figure}

\subsection{quantization measuremant analysis}
\label{3.b}
  For optimal quantization of signals, Mean Squared Quantization Error (MSQE) is proposed to measure the optimal quantization scale. It is defined as follows:
  \begin{equation}
    D(x,\hat{x}) = \int_{0}^{+\infty} (x-\hat{x})^p f(x)dx
  \end{equation}
  Here, $f(x)$  represents the probability density function of the original signal. In the case of outliers introduced by the affine parameters of BatchNorm, we can model the feature map with outliers using a Gaussian mixture model:
  \begin{equation}
    f_{act}(x) = \frac{1}{N}\sum_{n=1}^{N} \mathcal{N}(\mu, t^2_n \sigma^2)
  \end{equation}
  As discussed earlier, in several channels, $t_N$ takes on large values. For most channels, $t_N$ is similar, allowing us to treat these channels as a collective Gaussian distribution. In regression problems, Mean Squared Error (MSE) is not suitable for data with outliers that are not crucial for the results. MSE amplifies the distortion between predicted values and real values for outliers, leading to deviations from the optimal position in the output. The outliers in the activation have a similar effect on quantization scales.
  \begin{proposition}
    For a Gaussian distribution followed $\mathcal{N}(\mu, \sigma^2)$, assume $h$ is the optimal accuracy quantization clipping value for this distribution. For a Gaussian mixture distribution with scalar outliers $\sum_{n=1}^{N} \mathcal{N}(\mu, t_n^2\sigma^2)$, using a L$p$ Loss as distortion for quantization. The best point for the quantization is $\sqrt[p]{\sum_{0}^{N}t_n^p/N}h$.
  \end{proposition}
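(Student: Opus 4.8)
The plan is to exploit the scale structure of the scalar-outlier model to reduce the mixture problem to the single-Gaussian problem whose optimum $h$ is assumed known, and then to recombine the per-component optima through the $L_p$ loss. First I would fix the bit budget (number of levels $L$), so that a symmetric clip value $c$ induces a step size $\Delta \propto c$, and write the $L_p$ distortion of a single clipped uniform quantizer as the sum of a granular (rounding) term and an overload (clipping) term,
\begin{equation}
  D(c) = \frac{(c/L)^p}{p+1} + 2\int_{c}^{+\infty}(x-c)^p f(x)\,dx .
\end{equation}
For the base Gaussian $\mathcal{N}(\mu,\sigma^2)$ this is minimized at $c=h$ by hypothesis, and (after centering, as the earlier proposition does when it sets the means equal) the analysis depends only on deviations from $\mu$.

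Next I would establish the key homogeneity property. Since each component $\mathcal{N}(\mu,t_n^2\sigma^2)$ is the base Gaussian with its deviations scaled by $t_n$, and since the uniform quantizer satisfies the scale-equivariance $Q_c(t_n z)=t_n Q_{c/t_n}(z)$, the distortion of component $n$ obeys $D_n(c)=t_n^p\,D_1(c/t_n)$, where $D_1$ is the base distortion. A useful consequence is that the granular term $t_n^p\,( (c/t_n)/L )^p/(p+1)=(c/L)^p/(p+1)$ is shared by all components (the step size is common), while each overload term rescales by $t_n^p$ with argument $c/t_n$; in particular the per-component optimum is exactly $t_n h$.

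I would then write the mixture distortion as the equal-weight average $D(c)=\frac{1}{N}\sum_{n=1}^{N} t_n^p D_1(c/t_n)$, impose $D'(c)=0$, and substitute the candidate $c^\star=\sqrt[p]{\tfrac{1}{N}\sum_{n=1}^N t_n^p}\,h$. Using the base optimality condition to eliminate the granular derivative, the claim reduces to a single aggregation identity asserting that, under the $L_p$ loss, the mixture optimum is the power-$p$ mean of the per-component optima $t_n h$, i.e. $(c^\star)^p=\frac{1}{N}\sum_n (t_n h)^p$. The most transparent route is to change variables to the ``$L_p$ budget'' $v=c^p$, in which each component distortion is minimized at $v_n=(t_n h)^p$, so that equal weighting drives $v^\star$ toward the arithmetic mean of the $v_n$, which is exactly $(c^\star)^p$.

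The hard part will be justifying that aggregation exactly, because the component distortions do not share the same curvature in $v$ (the factor $t_n^p$ makes large-$t_n$ components flatter) and the Gaussian overload integral is not a pure power law, so a naive second-order argument produces a \emph{different} mean rather than the arithmetic power-$p$ mean that is claimed. I would resolve this by invoking the scalar-error assumption already used in the preceding proposition: outliers enter only through the variance, the many bulk channels act as one collective Gaussian, and the rare large-$t_n$ components contribute negligibly to the shared granular balance. Under this approximation the per-component contributions enter with equal effective weight and the $p$-th moments dominate, which is precisely the regime in which the power-$p$ mean emerges; identifying and controlling this approximation is the crux, since it is what fixes the claimed optimum $\sqrt[p]{\tfrac{1}{N}\sum_n t_n^p}\,h$ as opposed to a harmonic- or weighted-type mean of the $t_n$.
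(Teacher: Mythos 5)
Your proposal is correct in the same sense that the paper's own argument is correct, but it takes a noticeably longer and more explicit route. The paper's proof is a direct integrand-level manipulation: it changes variables in each component integral so that every term is an expectation over the \emph{base} Gaussian density, implicitly uses the scale-equivariance $\widehat{t_n x}=t_n\hat{x}$ to write $\frac{1}{N}\sum_n |t_n x-\widehat{t_n x}|^p \approx k^p|x-\hat{x}|^p$ with $k=\sqrt[p]{\sum_n t_n^p/N}$, absorbs the constant $k$ into an effective Gaussian $\mathcal{N}(\mu,k^2\sigma^2)$, and reads off the optimum $kh$ from the assumed optimality of $h$ for the unit-scale problem. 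You instead decompose the distortion into granular and overload terms, derive the per-component scaling law $D_n(c)=t_n^p D_1(c/t_n)$ with per-component optima $t_n h$, and then aggregate via the first-order condition to obtain the power-$p$ mean. The two approaches rest on exactly the same two ingredients (scale equivariance of the quantizer and $L_p$ aggregation into a power mean), so the destination is identical; what your route buys is a precise localization of where the approximation actually lives. The ``hard part'' you flag --- that a common clip value $c$ does not rescale per component, so the aggregation identity is not exact and a curvature-based argument would yield a different mean --- is genuinely present in the paper's proof as well: it is hidden entirely in the single ``$\approx$'' step, which the paper justifies only by the scalar-outlier modeling assumption and never revisits. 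So you have not introduced a new gap; you have made the paper's existing gap visible and named the assumption (bulk channels acting as one collective Gaussian, rare large-$t_n$ components not perturbing the shared granular balance) needed to close it. If anything, your granular/overload decomposition is the more honest framework, though for the purposes of this paper the shorter integrand-level argument suffices since the result is only used qualitatively to motivate replacing MSE by MAE.
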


\begin{proof}
  Let us assume $X \sim \mathcal{N}(0, t_n^2\sigma^2)$, and consider the distortion $D(X, \hat{X})$ for quantization. We can express it as follows:
  \begin{equation}
    \begin{aligned}
      D(X, \hat{X}) &= \frac{1}{N} \sum_{n=0}^{N} \int_{0}^{+\infty} |x - \hat{x}|^p f_n(x)dx \\
      &= \int_{0}^{+\infty} \frac{|t_nx - \widehat{t_nx}|^p}{N}f(x)dx \\
      &\approx \int_{0}^{+\infty} \sqrt[p]{\sum_{0}^{N}\frac{t_n^p}{N}}|x-\hat{x}|^p f(x) dx \\
    \end{aligned}
  \end{equation}
  Here, $f(x)$ represents the probability density function for a standard Gaussian distribution. We can rewrite this expression as:
  \begin{equation}
    \begin{aligned}
      D(X, \hat{X}) &= \int_{0}^{+\infty} \sqrt[p]{\sum_{n=0}^{N}\frac{t_n^p}{N}}|x-\hat{x}|^p \frac{1}{\sqrt{2\pi}\sigma} e^{-\frac{x^2}{2\sigma^2}} dx \\
      &= \int_{0}^{+\infty} |x-\hat{x}|^p \mathcal{N}(\mu, k^2\sigma^2)dx \\
      k &= \sqrt[p]{\sum_{n=0}^{N}\frac{t_n^p}{N}} \\
    \end{aligned}
  \end{equation}
  Thus, we can interpret this as a quantization distortion for a Gaussian distribution with parameters $\mathcal{N}(\mu, k^2\sigma^2)$, where the optimal quantization clipping value is given by $kh$.
\end{proof}

The scalar factor $k=\sqrt{\sum_{n=0}^{N}t_n^2 / N}$ in the MSE represents the mean square of the scalar factors for each variance $t$. However, the mean square amplifies the effect of outliers, giving them more influence than normal values. To mitigate this, we can reduce the value of $p$. By choosing $p$ smaller, we can preserve the effect of outliers. In our papers, we utilize the Mean Absolute Error (MAE) instead of the MSE. In MAE, the scalar factor is calculated as $k=\sum_{n=0}^{N}t_n/ N$, which represents the mean of the scalar factors for each variance $t$. Each Gaussian distribution contributes to the optimal quantization due to its expected value. Moreover, the optimal quantization clipping values obtained using MAE and MSE are close and can be considered similar. However, MAE has the advantage of minimizing the impact of outliers and being able to focus more on the important parts without outliers compared to MSE.

\section{Method}
\label{method}
In this section, we propose RepAPQ, an accurate Post-training Quantization framework for reparameterization networks. The framework, depicted in Fig. \ref{fig_rep}, comprises two methods: Quantization Protect Reparameterization and Block-Across Calibration. 

In RepAPQ, we partition the neural network into stages. Each stage, denoted by $n$, consists of $N$ blocks. We aim to distill the quantized blocks using the output of the full precision network. For reparameterized models, these blocks correspond to reparameterized convolution layers with activation layers. To strengthen the relationship between the blocks, we incorporate the distortion of the stage output. Within each block, we introduce Quantization Protecting Reparameterization, which inserts an affine layer after the reparameterized convolution layers. These affine layers serve two purposes: they facilitate faster convergence during optimization, and the channel-wise affine layer accommodates channel-specific outliers, adjusting the output to be closer to the real output.

\subsection{Preliminaries}

In our method, we followed the original quantization method for weight $w$.
\begin{eqnarray}
    \widehat{w} &=& clamp( \lfloor \frac{w}{s_{w}} \rceil, -2^{b-1}, 2^{b-1}) \times s_{w}
\end{eqnarray}
Here, $s_{w}$ represents the quantization scale for the weights, and $b$ denotes the desired quantization bit-width.

For activation quantization, we utilize the BatchQuant approach \cite{bai2021batchquant}. BatchQuant utilizes a moving average to calculate the extreme values, $x_{max}$ and $x_{min}$, for each layer of the neural network based on the current batch. It also incorporates a learnable scaling parameter, denoted by $\eta$, to normalize the activations. Additionally, a learnable parameter, $\epsilon$, is introduced to minimize quantization errors, similar to the approach used in LSQ+ \cite{bhalgat2020lsq+}.
The BatchQuant procedure for activation quantization can be described as follows:
\begin{equation}
  \begin{aligned}
    s_{x} = \frac{x_{max}-x_{min}}{2^{n}-1}\eta, \beta_{x} = \lfloor -\frac{x_{min}}{s_{x}}  - \epsilon \rceil \\
    \widehat{x} = ((\lfloor \frac{x}{s_{x}} \rceil + \beta_{x}).clamp(0, 2^b-1) - \beta_{x})s_{x}
  \end{aligned}
\end{equation} 
In the above equations, $s_x$ represents the scaling factor for activation, and $\beta_x$ denotes the zero point of activation. These values are calculated based on the extreme values obtained during the calibration process. It is important to note that the extreme values are fixed to ensure consistency throughout the calibration process.

\begin{figure*}[!t]
    \begin{center}
    \centerline{\includegraphics[width=1\linewidth]{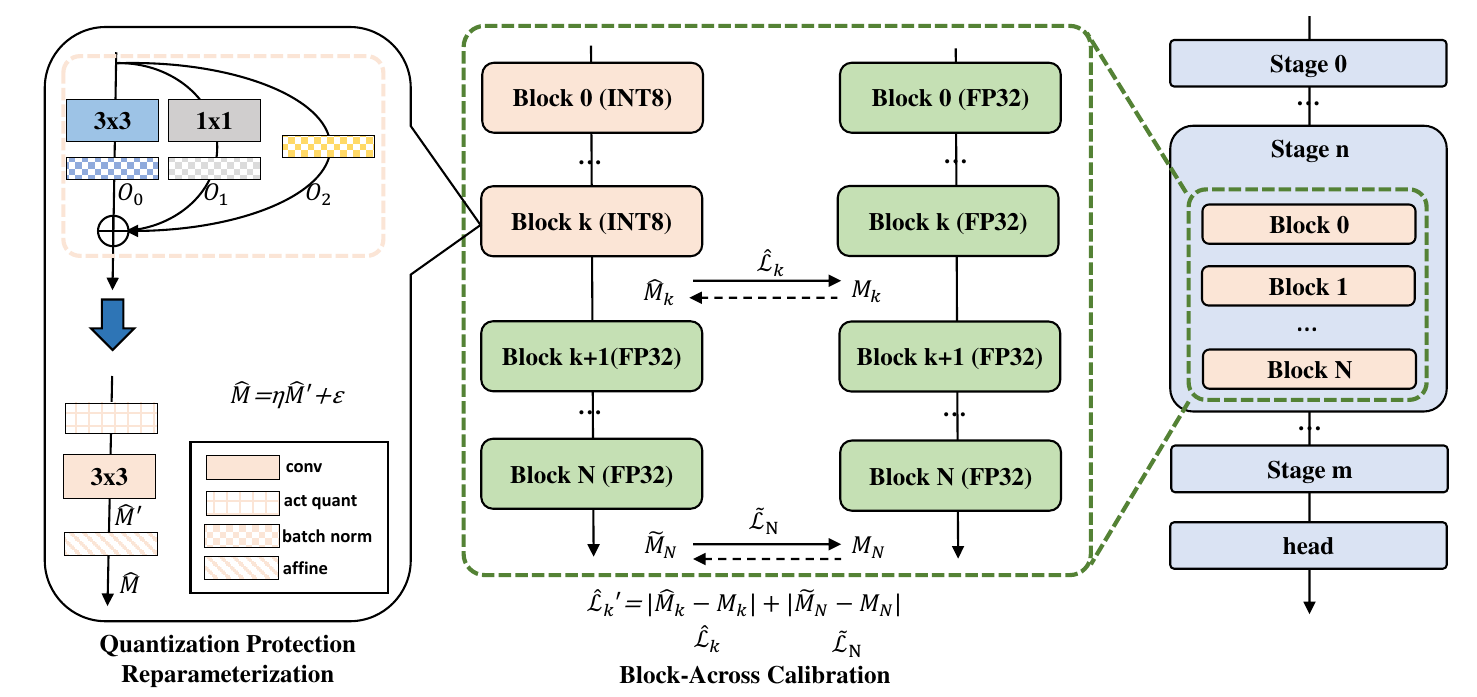}}
    \caption{The process of our method.  We consider a neural network with stage $n$ consisting of $N$ blocks. For the ABC technique, we utilize the feature maps of intermediate blocks and the stage to distill the quantized block. We only use the output of the distillated block, and as we progress deeper into the blocks, we strengthen the loss function. Regarding the QPRep technique, we first merge the three branches and introduce an affine layer. During inference, the affine parameters are combined with the quantization scales and convolution biases.}
    \label{fig_rep}
    \end{center}
    \vskip -0.2in
\end{figure*}

For the AdaQuant scheme, our objective is to minimize the mean squared error (MSE) between the per-layer output and its corresponding quantized output. AdaQuant leverages the quantized values as inputs for sequential quantization, which helps improve the overall accuracy.

Mathematically, we can express the optimization objective as follows:
\begin{equation}
  \begin{aligned}
    (s_{w}, s_{x}, V)_v &= \mathop{argmin}\limits_{s_{w}, s_{x}, V} | M_{k} - \widehat{M_{k}} |^{2} \\
    &= \mathop{argmin}\limits_{s_{w}, s_{x}, V} |L_k(L_{k-1}(\dots L_0(X))) \\
    &- \widehat{L}_k(\widehat{L}_{k-1}(\dots \widehat{L}_{0}(X)))|^{2}
  \end{aligned}
\end{equation}
Here, $M_{k}$ denotes the output of the $k$-th layer, and $\widehat{M_{k}}$ represents the quantized output of the $k$-th layer. $(s_{w}, s_{x}, V)_{k}$ refers to the quantization parameters for the $k$-th layer, and $V$ represents the updated value of the weight $w$.

Based on Proposition 2, we choose the mean absolute error (MAE) as the distortion metric for both quantized values and full precision values in our method. Additionally, in this section, we utilize the Gaussian mixture model proposed in Section \ref{3.b} to analyze the gradient, considering the different distortion metrics for quantization and full precision values.

\subsection{Quantization Protecting Reparameterization}
According to Proposition 1, when convolution layers receive inputs with outliers, the output of these layers also contains outliers. However, by applying quantization clipping ranges, we can limit the presence of outliers. In sequential quantization, we utilize the quantized values as inputs to distillate the layer to be quantized. This process further restricts the outliers in the output. As we use the output of each block to distillate the quantized output, the outliers in the full precision output are larger compared to the quantized output.
Assuming we have an $n$-th channel with outliers, the gradient with respect to $\widehat{M}_{n}$ can be expressed as follows:
\begin{equation}
  \frac{\partial Loss}{\partial \widehat{M_{n}}} = sgn(\widehat{M_{n}} - M_{n}) \\
\end{equation}
Approximating the gradient, we have:
\begin{equation}
  \begin{aligned}
    \partial \widehat{M_{n}} &\approx \int_{\widehat{M_{n}}_{max}}^{+\infty} sgn(M-\widehat{M_{n}}_{max})f_{M_{n}}(x) dx \\
    &\approx 1-\varGamma(\frac{\widehat{M_{n}}_{max} - \mu}{t_n\sigma})
  \end{aligned}
\end{equation}  
Here, $\widehat{M}$ still exhibits an increasing trend, but the rate of increase is limited and will not become excessively large.

Taking $s_w$ as an example, we can compute the gradient of$s_w$ as follows:
\begin{equation}
  \begin{aligned}
    \frac{\partial Loss}{\partial s_{w}} &= \frac{\partial Loss}{\partial \widehat{M}} \frac{\partial \widehat{M}}{\partial \widehat{w}} \frac{\partial \widehat{w}}{\partial s_{w}} \\ &= sgn(\widehat{M}-M)\widehat{x}(\frac{w}{s_{w}} - \lfloor \frac{w}{s_{w}} \rceil)
  \end{aligned}
\end{equation}
Similar to the $\widehat{M}$,  the magnitude of the gradient is limited, and the value of $s_{w}$ will not increase excessively, thereby avoiding the introduction of a poor quantization scale. This observation can be generalized to all optimizable values, indicating that MAE can effectively limit the introduction of outliers.

To account for samples with outliers, we introduce a new degree of freedom to amplify the output of each layer, bringing it closer to the original value of the outliers. During Reparameterization, we insert a channel-wise affine layer after the reparameterized convolutional layers:
\begin{equation}
  \widehat{M} = \eta \widehat{M'} + \varepsilon 
\end{equation}
Here, $\widehat{M'}$ represents the output of the reparameterized convolutional layers, $\eta$ is a channel-wise learnable parameter, and 
$\varepsilon$ is a bias term. This affine layer enables the adjustment of the output to better capture the channel-specific outliers and bring it closer to the true output.

The gradient of $\eta$ can be computed as follows:
\begin{equation}
  \frac{\partial Loss}{\partial \eta} = \frac{\partial Loss}{\partial \widehat{M'}} \frac{\partial \widehat{M'}}{\partial \eta} = sgn(\widehat{M}-M)\widehat{M'}
\end{equation}
The gradient of $\eta$ is relative to $\widehat{M'}$. Since $\widehat{M'}$ may have outliers, $\eta$ will learn to fit the distribution of $\widehat{M'}$. Assuming $n$-th channel  has outliers, we can calculate the gradient of $\eta_{n}$ and integrate it over the range of $\widehat{M_{n}'}$ as follows:
\begin{equation}
  \begin{aligned}
    \partial \eta_{n} &= \int_{0}^{+\infty} sgn(\widehat{M_{n}}-M_{n})\widehat{M_n'} dM_n' \\
                   &\approx \int_{max(\widehat{M_n'})}^{+\infty} sgn(\widehat{M_n}-M_n)\widehat{M_n'} f_{M_n'}(M_n')dM_n'
  \end{aligned}
\end{equation}
Here, $f_{M_n'}(M_n')$ represents the probability density function of $\widehat{M_n'}$. By integrating over the range of $\widehat{M_n'}$ starting from the maximum value, we take into account the contribution of outliers and their impact on the gradient of $\eta_n$.

For channels with outliers, the gradient of $\eta_n$ is larger, leading to an increase in $\eta_n$ itself. A large value of $\eta_n$ amplifies the output of the channel, bringing the outliers closer to their original values. However, the use of MAE helps prevent $\eta$ from becoming excessively large, thereby preserving the overall distribution of the output. Furthermore, a larger $\eta$ facilitates faster convergence of $\widehat{M}$ towards the optimal position during training. The insertion of affine layers aids in accelerating the convergence of quantization parameters, preventing them from growing too large. During inference, the affine parameters can be fused into the quantization scale and convolution bias, thereby incorporating their effects without introducing additional accuracy loss or computational overhead \cite{hubara:accurate}. Typically, quantization scale and convolution bias are represented with a specific bit-width, such as 32 bits or 16 bits, which corresponds to the precision of the multiply-add operation.

\subsection{Across-block Calibration}
While MAE introduces more quantization noise compared to MSE due to its element-wise gradient, in practice, it is beneficial to incorporate an element-wise gradient for more accurate gradient estimation. Block reconstruction techniques, such as BRECQ \cite{li:brecq}, leverage the output of blocks to distillate the quantized block output, effectively mitigating overfitting in quantized models. In reparameterized models, blocks consist of a convolution layer and an activation layer, deviating from the structure of traditional neural networks.
Moreover, stage reconstruction is generally less effective than block reconstruction, as discussed in BRECQ. Traditional model designs often involve a larger number of stages, and optimizing them jointly can lead to underfitting. In reparameterized models, blocks are fused during inference, and the reconstruction of blocks simplifies layer reconstruction.

When reconstructing the 3-4 convolutional layers in the model into a "block" following the structure of other networks, there is a possibility that layers with outliers may be selected as the output of the "block". According to Proposition 1, after a layer whose weights follow a Gaussian distribution without outliers, the outliers in the feature map are limited.

In the optimization process, despite the outlier issue, we continue to utilize MAE as the measurement of the block output. Specifically, for the $k$-th block within a stage consisting of $m$ blocks, our objective is to minimize the discrepancy between the output of the $k$-th block and its estimate, as well as the discrepancy between the output of the entire stage and its estimate. The optimization problem can be formulated as follows:

\begin{equation}
  (s_{w}, s_{x}, V)_{k} = \mathop{argmin}\limits_{(s_{w}, s_{x}, V)_{k}} | \widetilde{M}_{N} - M_{N} | + | \widehat{M}_{k} - M_{k} |
\end{equation}

Where $\widetilde{M}_{N}$ and $M_{N}$ represent the output of the stage and its estimate, respectively. During the distillation of the $k$-th block, the other blocks within the stage are kept fixed.

To compute the gradient of the loss with respect to $\widetilde{M}_{k}$, we can utilize the chain rule:

\begin{equation}
  \frac{\partial Loss}{\partial \widetilde{M}_{k}} = 1 + \frac{\partial \widetilde{M}_{k+1}}{\partial \widetilde{M}_{k}} \cdot \frac{\partial \widetilde{M}_{k+2}}{\partial \widetilde{M}_{k+1}} \cdots \frac{\partial \widetilde{M}_{N}}{\partial \widetilde{M}_{N-1}} = 1 + \prod_{n=k+1}^{N} w_n 
\end{equation}

where $\widetilde{M}_{N}$ is computed as the result of applying the network layers $L_{N}$, $L_{N-1}$, ..., $L_{k+1}$ to the output of the reparameterized layer $\hat{L}_{k}$, which in turn takes the input $\hat{x}$.

To ensure a more accurate gradient, we introduce an element-wise gradient for the output, taking the weights after the current blocks into account. The second part of the loss serves as input distillation for the blocks that follow the $k$-th block in the neural network architecture. This approach addresses the issue of consistent gradients for all elements and introduces the optimal stage gradient to expedite convergence.

In cases where a stage consists of only one block or when considering the last block in the stage, we still employ Mean Squared Error (MSE) as the measurement of block output. This is done to avoid introducing excessive noise by using Mean Absolute Error (MAE). The size of the calibration dataset and the number of iterations are related to the optimal number of parameters, as discussed in AdaQuant. To prevent overfitting and combine the measurement of layer output and stage output in our approach, we constrain the optimization to update only one block parameter every 1000 iterations.

\begin{table*}[!t]
  \caption{Top-1 Accuracy of 8-bit quantized repmodels compared with previous methods. The full precision model is from the open-source code. 
  For 8-bit quantization, all weight and input were quantized to 8bit.
  For 6-bit quantization, we keep the first and last layer as 8bits. $\dagger$ means replace the MSE with MAE.
  * present using the quantization scheme from QDrop papers. Full connected layers are channel-wise and all weight are asymmetric quantized.}
  \begin{center}

        \begin{tabular}{ccccccccccc}
          \hline
          Methods & \makecell[c]{Bits\\(W-A)} & \makecell[c]{Rep\\VGGA0} & \makecell[c]{Rep\\VGGA1} & \makecell[c]{Rep\\VGGA2} & \makecell[c]{Rep\\VGGB0} & \makecell[c]{Rep\\VGGB1} & \makecell[c]{Mobile\\OneS0} & \makecell[c]{Mobile\\OneS1} &\makecell[c]{Mobile\\OneS2} & Average \\
          \hline 
          Original & 32-32 & 72.41 & 74.46 & 76.46 & 75.15 & 78.37 & 71.42 & 75.94 & 77.43 & 75.21(-0.00) \\
          \hline
          W/O Rep & 8-8 & 71.75 & 73.99 & 76.35 & 75.18 & 74.63 & 70.26 & 74.95 & 76.64 & 74.22(-0.99)\\
          MIN/MAX & 8-8 & 34.34 & 57.33 & 60.06 & 0.49 & 0.55 & 68.23 & 72.63 & 75.04 & 46.08(-29.13)\\
          OMSE\cite{choukroun:low} & 8-8 & 50.31 & 59.37 & 57.86 & 36.67 & 7.53 & 67.24 & 72.60 & 75.33 & 53.46(-21.85) \\
          OMSE$\dagger$ & 8-8 & 51.05 & 60.85 & 60.85 & 54.73 & 31.19 & 68.45 & 73.25 & 75.66 & 59.50(-15.71) \\
          SmoothQuant\cite{xiao2022smoothquant} & 8-8 & 56.33 & 66.67 & 69.01 & 62.72 & 72.73 & 68.16 & 72.52 & 73.48 & 67.70(-7.51) \\
          Outliers Suppresion\cite{wei2022outlier} & 8-8 & 56.34 & 65.95 & 67.68 & 64.98 & 72.62 & 68.73 & 72.80 & 74.91 & 68.00(-7.21)\\
          AdaQuant\cite{hubara:accurate} & 8-8 & 70.22 & 72.46 & 75.14 & 73.36 & 76.80 & 69.89 & 72.86 & 75.57 & 73.29(-1.91)\\
          QDrop*\cite{wei2022qdrop} & 8-8 & 68.72 & 66.75 & 73.04 & 32.04 & 1.37 & 70.03 & 73.53 & 75.41 & 57.61(-17.60) \\
          \textbf{Ours} & 8-8 & \textbf{72.04} & \textbf{74.28} & \textbf{76.33} & \textbf{74.93} & \textbf{78.19} & \textbf{71.06} & \textbf{75.27} & \textbf{77.06} & \textbf{74.90(-0.31)} \\
          \hline
          AdaQuant\cite{hubara:accurate} & 6-6 & 67.05 & 67.42 & 71.51 & 68.74 & 72.64 & 65.84 & 69.10 & 70.63 & 69.12(-6.09) \\
          QDRop*\cite{wei2022qdrop} & 6-6 & 63.18 & 67.32 & 71.49 & 65.19 & 71.28 & 62.43 & 69.03 & 69.82 & 67.47(-7.74) \\
          \textbf{Ours} & 6-6 & \textbf{69.85} & \textbf{72.49} & \textbf{75.22} & \textbf{73.06} & \textbf{76.63} & \textbf{66.36} & \textbf{69.69} & \textbf{71.25} & \textbf{71.82(-3.39)} \\
          \hline
        \end{tabular}
  \end{center}
  \label{IC_exp}
  \vskip -0.2in
\end{table*}

\section{Experiments}

In this section, we present the application of our methods to various models and datasets. We initially focused on image classification models, specifically RepVGG \cite{ding:repvgg} and MobileOne \cite{vasu:improved}, which proved challenging to quantize even with 8-bit precision. Subsequently, we extended our experiments to object detection models using YoLov6 on the COCO dataset.

In all our experiments, we utilized a small calibration set randomly sampled from the full training dataset. The calibration set consisted of 1024 samples for ImageNet \cite{deng:imagenet} and 512 samples for COCO \cite{lin2014microsoft}. For each reparameterized layer, we performed 1000 iterations using the Adam \cite{kingma2014adam} optimizer and employed a CosineDecay learning rate scheduler to optimize the models.

\begin{table}[!t]
  \caption{Learning Rate for different models and bit-widths.}
  \begin{center}
  \begin{tabular}{lccccr}
  \hline
  \makecell[c]{Parameters} & \makecell[c]{RepVGG \\ A8W8/A6W6} & \makecell[c]{RepVGG \\ A4W4} & \makecell[c]{MobileOne\\ A8W8/A6W6}\\
  \hline
  \makecell[c]{Weight} & 1e-5 & 1e-4 & 1e-5 \\
  \makecell[c]{bias} & 1e-4 & 1e-3 & 1e-4 \\ 
  \makecell[c]{$s_{w}$} & 1e-5 & 1e-5 & 1e-3 \\
  \makecell[c]{$s_{x}$} & 1e-3 & 1e-3 & 1e-3 \\
  \makecell[c]{BN} & 1e-2 & 1e-1 & 1e-2 \\
  \makecell[c]{Others} & 1e-5 & 1e-5 & 1e-5 \\
  \hline
  \end{tabular}
  \end{center}
  \label{Exp_Setup}
  \vskip -0.2in
\end{table}
To ensure consistency, we set the learning rates of the parameters according to the values specified in Table \ref{Exp_Setup}. The remaining hyperparameters are set to their default values in PyTorch.

To simulate real deployment scenarios, we adopt specific quantization configurations. Specifically, we use tensor-wise asymmetric quantization with offset for activation quantization, and channel-wise symmetric quantization without offset for weight quantization. For efficient inference, we utilize tensor-wise symmetric quantization for weight quantization in fully connected layers.

\subsection{Results on Image Classification}
We present our results and provide a comparative analysis with previous methods in Table \ref{IC_exp}. Reparameterized models present a challenge for quantization into 8 bits, as it can result in significant accuracy loss. However, direct quantization without reparameterization can result in increased inference time and storage requirements. By changing the measurement from OMSE \cite{choukroun:low} to MAE, the results of 8-bit quantization are improved. 
Existing methods such as SmoothQuant\cite{xiao2022smoothquant} and Outliers Suppression\cite{wei2022outlier} can mitigate accuracy degradation, but they still fall short in terms of achieving comparable accuracy to full-precision models. Even the state-of-the-art method QDrop\cite{wei2022qdrop}, which utilizes asymmetric weight quantization to improve accuracy, can result in over 1\% accuracy loss on reparameterized models.
While vanilla AdaQuant can achieve around 1\% accuracy loss on RepVGGAs, the accuracy loss is higher for RepVGGBs. Our proposed method achieves less than 0.5\% accuracy loss for all RepVGG models and outperforms other methods for lighter models like MobileOne. It is worth noting that larger reparameterized models are more sensitive to quantization due to having more and larger outliers. Despite the design of AdaRound-like quantization being more favorable for low-precision quantization, our method surpasses previous approaches even for lower-bit quantization scenarios. In the context of 6-bit quantization, our method exhibits a notable improvement of nearly 3\% in accuracy when compared to other methods. This finding further underscores the effectiveness of our approach.

\subsection{Comparison with RepOpt and QARepVGG}

The primary goal of RepOpt is to incorporate reparameterization during training and directly obtain reparameterized models. However, RepOpt may not perform well with small network architectures like RepVGGA0 and can even lead to reduced accuracy. Additionally, RepOpt is not compatible with models that use different reparameterization methods, such as MobileOne. We perform a comparative analysis between the data mentioned in \cite{chu2022make} and our data in Table \ref{Opt_exp}. We find that while QARepVGG can partially address the quantization issue, it introduces additional accuracy degradation to original models. On the other hand, our quantized RepVGGA0 and RepVGGB0 models outperform the quantized QARepVGG models obtained using the QAT method. It's important to note that RepOpt and QARepVGG are training methods that require a significant amount of time to obtain a quantization-friendly model. Furthermore, they are model-specific, and designing reparameterized optimizers or losses for future models can be time-consuming. In contrast, our approach tackles the quantization problem from a quantization perspective and offers a model-agnostic method that is independent of RepOpt and QARepVGG. This allows for a more versatile and efficient solution that can be applied across different models without the need for model-specific modifications.
\begin{table}[!t]
  \caption{Comparison of 8-bit Quantized RepVGG Models: Our Method vs. RepOpt and QARepVGG. '$^\dagger$' indicates results from the original paper by QARepVGG \cite{chu2022make}.}
  \begin{center}
  \begin{tabular}{lcccr}
  \hline
  \makecell[c]{Model} & \makecell[c]{Accuracy\\FP32} & \makecell[c]{Accuracy\\PTQ} & \makecell[c]{Accuracy \\ QAT} \\
  \hline
  \makecell[c]{RepVGGA0+Ours} & 72.4 & \textbf{72.0(-0.4)} & / \\
  \makecell[c]{RepOptVGGA0$^\dagger$} & 70.9 & 64.8(-6.1) & / \\ 
  \makecell[c]{QARepVGGA0$^\dagger$} & 72.2 & 70.4(-1.8) & 71.9(-0.3) \\
  \makecell[c]{RepVGGB0+Ours} & 75.2 & \textbf{74.9(-0.3)} & / \\
  \makecell[c]{RepOptVGGB0$^\dagger$} & 73.8 & 62.6(-11.2) & / \\
  \makecell[c]{QARepVGGB0$^\dagger$} & 74.8 & 72.9(-1.9) & 74.6(-0.2) \\
  \makecell[c]{RepVGGB1+Ours} & 78.4 & \textbf{78.2(-0.2)} & / \\
  \makecell[c]{RepOptVGGB1$^\dagger$} & 78.5 & 75.9(-2.6) & / \\
  \makecell[c]{QARepVGGB1$^\dagger$} & 78.0 & 76.4(-1.6) & / \\
  \hline
  \end{tabular}
  \end{center}
  \label{Opt_exp}
  \vskip -0.2in
\end{table}

\subsection{Results on Object Detection}
Yolov6 utilizes RepVGGblock for enhanced feature extraction, but the reparameterization modules introduce challenges for quantization. To address this, the model has been redesigned to be more quantization-friendly. In our experiments, we evaluate both the original and redesigned models, and we have documented the results in Table \ref{Detect_exp}.
Our proposed method, as well as AdaQuant, achieves similar accuracy when applied to the quantization-friendly model. This indicates that our method is effective in preserving accuracy during quantization. However, the quantization challenges of Yolov6 are evident in the results of Quantization-Aware Training (QAT), where Yolov6s achieves only 43.3\% accuracy.
For Yolov6v1, our method outperforms AdaQuant and achieves nearly comparable results to QAT. This highlights the effectiveness of our approach in addressing the quantization challenges of Yolov6 and maintaining high accuracy. Overall, our methods demonstrate their ability to improve the quantization performance of Yolov6 models, achieving competitive accuracy compared to QAT and outperforming AdaQuant.
\begin{table}[!th]
  \caption{Detection results of YoLov6 Series on COCO. "v1" is the quantization unfriendly models.}
  \begin{center}
      \begin{tabular}{lccccr}
      \hline
      \makecell[c]{Model} &\makecell[c]{mAp(\%) \\ FP32} & \makecell[c]{mAP(\%)\\Naive} &\makecell[c]{mAP(\%) \\ Adaq} & \makecell[c]{mAP(\%) \\ Ours} \\
      \hline
      \makecell[c]{Yolov6n} & 36.2 & 34.1 & 35.4 & \textbf{35.6} \\
      \makecell[c]{Yolov6t} & 40.9 & 39.8 & 40.5 & 40.5 \\ 
      \makecell[c]{Yolov6s} & 43.9 & 42.5 & 43.2 & \textbf{43.3} \\
      \hline
      \makecell[c]{Yolov6nv1}& 35.0 & 33.0 & 34.4 & 34.4 \\
      \makecell[c]{Yolov6tv1}& 41.3 & 34.0 & 38.1 & \textbf{40.9} \\
      \makecell[c]{Yolov6sv1}& 43.1 & 35.2 & 41.9 & \textbf{42.3} \\
      \hline
      \end{tabular}
      \end{center}
      \label{Detect_exp}
\end{table}
\subsection{Ablation study}
\subsubsection{The effectiveness of our methods} 
The experiments conducted on RepVGGA0 and RepVGGB0 quantized to 6-bit demonstrate the effectiveness of the proposed methods in improving the accuracy of quantized reparameterized models. The results are summarized in Table \ref{ME_exp}.

Vanilla AdaQuant faces challenges in optimizing models with outliers when using Mean Squared Error (MSE) as the measurement. This leads to greater accuracy degradation in RepVGGB0 compared to RepVGGA0, likely due to the presence of larger outliers in RepVGGB0.

Our proposed method achieves similar accuracy loss for both RepVGGA0 and RepVGGB0, indicating that the accuracy degradation is primarily influenced by the presence of outliers. Given that RepVGGB0 has more parameters, the larger outliers in RepVGGB0 likely contribute to its greater accuracy degradation compared to RepVGGA0.
In conclusion, the experimental results demonstrate that each method can improve the accuracy of quantized reparameterized models. The specific impact of each method may vary depending on the presence of outliers.
\begin{table}[!th]
  \caption{Ablation study results of our method.}
  \begin{center}
  \begin{tabular}{lccr}
  \hline
  \makecell[c]{Method} & \makecell[c]{RepVGGA0 \\ Accuracy} & \makecell[c]{RepVGGB0 \\ Accuracy} \\
  \hline
  \makecell[c]{min/max} & 2.01 & 2.21 \\ 
  \makecell[c]{AdaQuant+MAE} & 65.15 & 69.06 \\
  \makecell[c]{+QPRep} & 69.06(\textbf{+3.91}) & 72.71(\textbf{+3.65}) \\
  \makecell[c]{+ABC} & 69.85(\textbf{+0.79}) & 73.06(\textbf{+0.35}) \\
  \hline
  \makecell[c]{Full precision} & 72.41 & 75.15 \\
  \hline
  \end{tabular}
  \end{center}
  \label{ME_exp}
  \vskip -0.1in
\end{table}

\subsubsection{Analysis on MAE and MSE.} We choose the RepVGGB0 model quantized with OMSE \cite{choukroun:low} that uses MAE and MSE to compare the effect of MAE and MSE. We plot the information entropy of quantized value for models optimized with MAE and MSE at Fig. \ref{fig_B0_Ent}. For MAE, the information entropy of quantized outputs is larger than MSE. To split the influence of quantization clipping value for the latter blocks, we plot the distribution of second blocks, and we can see that the quantization scale obtained by MAE is smaller than MSE and has bigger information entropy. To further look into the influence of clipping value, we plot the distribution of the nineth block, we can see that the quantized values cluster below 50 for MSE, but for MAE, the value is more uniform.

In the experiment, the RepVGGB0 model was quantized using the OMSE method, which utilizes both Mean Absolute Error (MAE) and Mean Squared Error (MSE) for comparison. The information entropy of the quantized values was plotted for models optimized with MAE and MSE, as shown in Fig. \ref{fig_B0_Ent}. It can be observed that the information entropy of the quantized outputs is larger when using MAE compared to MSE. This indicates that the quantization process based on MAE introduces more randomness and diversity in the quantized values. To ignore the influence of quantization clipping values on subsequent blocks, the distribution of the second block's values was plotted. It was observed that the quantization scale obtained by MAE is smaller than that obtained by MSE, resulting in a larger information entropy. This suggests that MAE-based quantization allows for a more suitable quantization range, leading to more diverse representations. Furthermore, the distribution of the ninth block's values was plotted to examine the influence of clipping values. For models optimized with MSE, the quantized values tend to cluster below 50. In contrast, for models optimized with MAE, the distribution of values is more uniform, indicating a more balanced spread of quantized representations. These observations highlight the differences between MAE and MSE-based optimization in the quantization process. MAE-based quantization introduces more diversity in quantized values and allows for a wider range of quantization representations, potentially leading to improved model performance in certain scenarios.

\begin{figure}[!tbh]
  \centering
  \centerline{
      \includegraphics[width=1\columnwidth]{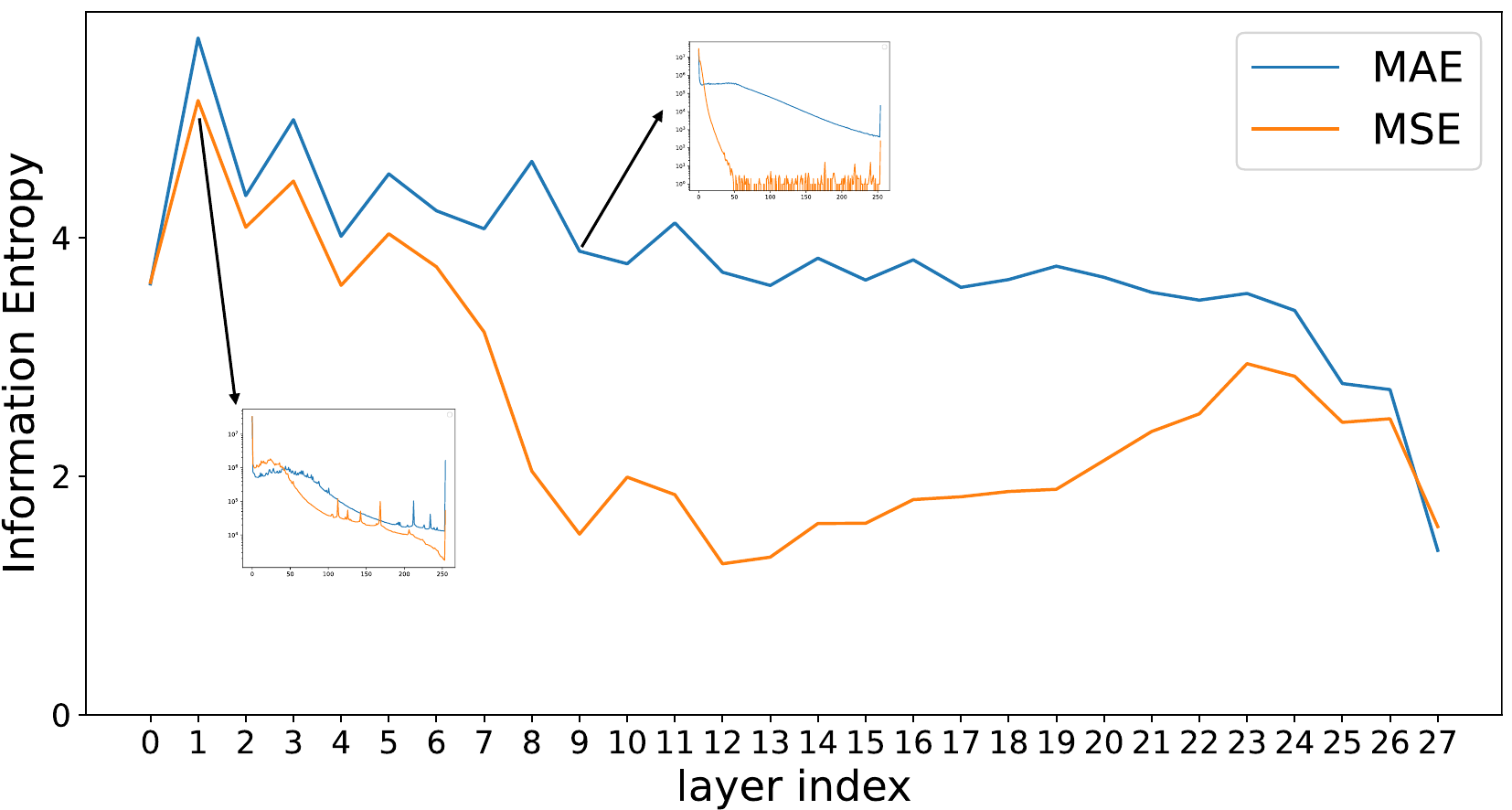}
  }
  \centering
  \caption{Information Entropy of RepVGGB0 quantized with MAE and MSE.}
  \label{fig_B0_Ent}
\end{figure}

\subsubsection{Analysis on QPRep} 
In the experiment, two models of RepVGGB0 were trained, one with QPRep and the other without QPRep. The distributions of the output from two blocks were examined to understand the effect of QPRep on outliers. From Fig. \ref{fig_DARep}, it can be concluded that QPRep helps amplify the features and capture the outliers. The presence of outliers is more prominent in the model trained with QPRep, indicating that QPRep enhances the representation of outliers. Analyzing alongside Fig. \ref{fig_range} and Fig. \ref{fig_DARep}, the ratio of the scale $\eta$ learned with QPRep corresponds to the ratio of outliers before and after quantization clipping. For the 9th block, where outliers are introduced, the quantization clipping does not significantly influence the output. Therefore, QPRep has little effect on this block. However, for the 23rd block, it can be observed that the quantization clipping almost estimates the outliers and the output with QPRep is amplified compared to the output without QPRep. This demonstrates that QPRep can amplify the outputs and maintain the effect of outliers. These findings indicate that QPRep plays a role in amplifying the representation of outliers and maintaining their impact on the model's output.

\begin{figure}[!tbh]
  \centering
  \subfloat[]{\includegraphics[width=0.45\columnwidth, trim=20 20 20 80]{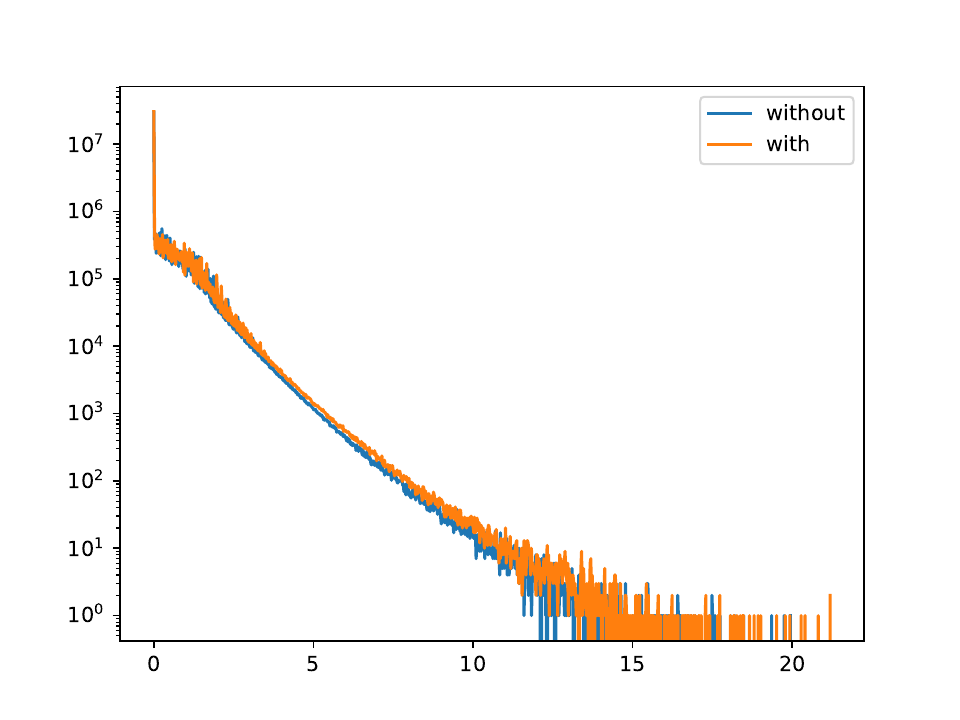}}
  \subfloat[]{\includegraphics[width=0.45\columnwidth, trim=20 20 20 80]{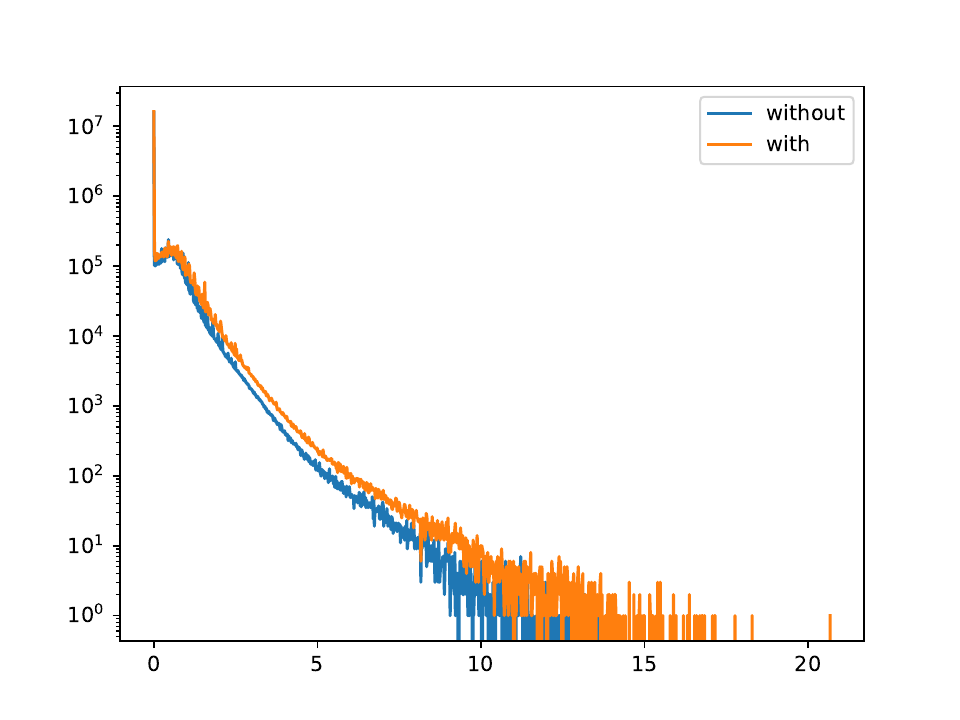}}
  \centering
  \caption{(a) is the output distribution of the 9th block, 
  (b) is the output distribution of the 23th block.}
  \label{fig_DARep}    
  \vskip -0.1in
\end{figure}

\subsubsection{Analysis on ABC} 
In this experiment, we changed the combination of measurements to compare the effectiveness of different combinations. From Table \ref{ABC_exp_2}, it can be observed that for most RepVGG models, MAE outperforms MSE. 
However, for RepVGGA0, which has minimal outliers, MSE may be more suitable than MAE. Following our paradigm, we changed both measurements to MSE. We found that incorporating stage knowledge is beneficial for most networks. 
According to Proposition 1 and Fig. \ref{fig_range}, the outliers of stage outputs are usually smaller than the outliers of layers output. 
Using MAE as the measurement is usually less affected by outliers. These results suggest that considering the MAE of the output, along with the stage output, can be effective in quantization, as it reduces the impact of outliers and improves the accuracy of quantized models.

\begin{table}[h]
  \caption{Results of different measuremant for ABC. Front part means the measuremant of layer outputs when later means the measuremant of stage outputs. }
  \begin{center}
      \begin{tabular}{lcccccr}
      \hline
      \makecell[c]{Model} & MAE+None & MSE+None & MSE+MSE  & MAE+MAE \\
      \hline
      \makecell[c]{RepVGGA0} & 65.15 & 67.05 & 65.90 & 67.83 \\
      \makecell[c]{RepVGGA1} & 69.31 & 67.42 & 70.38 & 70.60 \\ 
      \makecell[c]{RepVGGA2} & 72.43 & 71.51 & 73.03 & 73.55 \\
      \makecell[c]{RepVGGB0} & 69.06 & 68.74 & 70.06 & 70.34 \\
      \makecell[c]{RepVGGB1} & 72.63 & 72.64 & 70.17 & 74.75 \\
      \hline
      \end{tabular}
      \end{center}
      \label{ABC_exp_2}
      \vskip -0.1in
\end{table}

\section{Conclusion}
In this paper, we have analyzed the presence of outliers in reparameterization models and proposed an accurate Post-training Quantization algorithm specifically designed for such models. We have observed that the outliers in reparameterization models are sample-specific and channel-specific, and their size does not have a strong correlation with the accuracy. To address the channel-specific outlier issue, we have introduced an affine layer to scale the output of the channel, effectively protecting it during quantization. For the sample-specific outlier problem, we have replaced the traditional MSE measurement with MAE, which provides a more accurate gradient for distillation. Furthermore, we have presented a novel calibration paradigm called Across-block Calibration, which takes into account the information from the following blocks to overcome the limitations of MAE in terms of gradient accuracy. This paradigm allows us to introduce across-block information to improve the quantization process for the current block. By employing our quantization algorithm, we have demonstrated the ability to quantize reparameterized models into INT8, with accuracy comparable to that of full-precision models. Moreover, we have pushed the limits of quantization further by achieving higher accuracy with INT6 precision, approaching the performance of full-precision models.
It's worth noting that our method may not perform well on reparameterized depth-wise convolutions for lower-bit quantization. In addition, our focus is on the VGG-like models, exploring the application of our method in more complex network architectures, and addressing depth-wise convolution problems, which is a promising direction for future research. We believe that even simple network architectures possess huge potential to achieve performance on par with more complex structures.

\section{Acknowledgments}
\noindent This work was supported by National Key Research and Development Program of China (2022YFC3602601), and Key Research and Development Program of Zhejiang Province of China (2021C02037).

{\appendix[Compararison of quantization for post-add batch normalization and pre-add batch normalization.]
\label{CompNorm}
The Post-add BatchNorm technique is effective in normalizing the sum of branches' outputs and can help limit the presence of outliers in the sum. However, it has been observed, as discussed in RepVGG\cite{ding:repvgg}, that applying BatchNorm after the addition operation can have a negative impact on the accuracy of the model. On the other hand, the Pre-add BatchNorm normalizes the output of each branch before the addition operation takes place. This means that the normalization is applied to each branch individually, potentially including the outliers present in each branch's output. It is important to consider the trade-off between reducing outliers and maintaining accuracy when choosing between Post-add and Pre-add BatchNorm. While Post-add BatchNorm can limit the outliers in the sum, it may have a detrimental effect on accuracy. On the other hand, Pre-add BatchNorm may include outliers in the normalization process but may provide better accuracy results. The choice between these techniques depends on the specific model and the balance between reducing outliers and preserving accuracy. It is recommended to experiment with both approaches and evaluate their impact on the particular model and task at hand.
\begin{figure}[!h]
  \includegraphics[width=0.9\columnwidth]{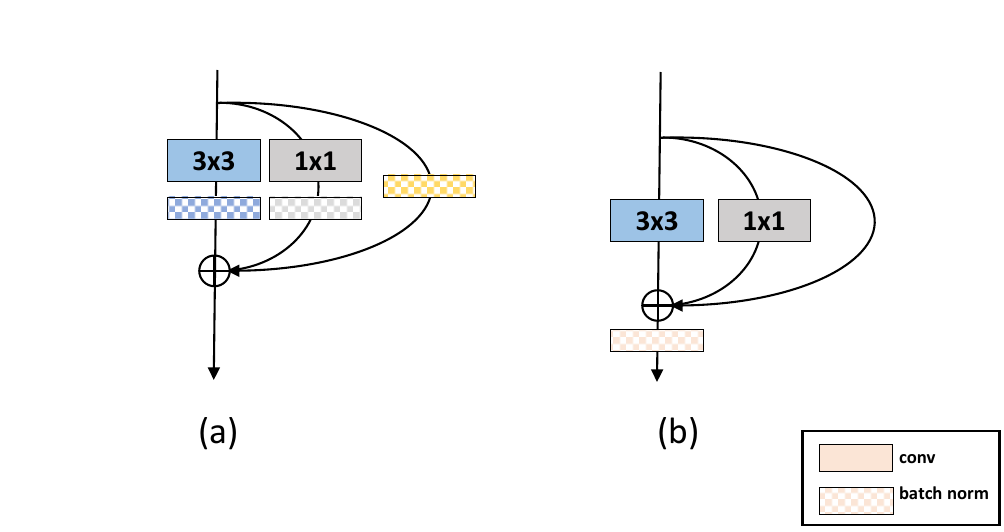}
  \caption{Difference of Post-Add BatchNorm and Pre-Add BatchNorm.(a) is Pre-Add BatchNorms and (b) is Post-add BatchNorm.}
\end{figure}

We trained a RepVGGA0 model with post-add batch normalization and applied the OMSE algorithm to quantize it. The experimental results showed that compared to the pre-add batch normalization approach, the post-add batch normalization technique led to a higher loss in accuracy for the full precision models, with approximately 1\% degradation. Interestingly, when both models were quantized, the pre-add batch normalization model showed no significant loss of accuracy, while the post-add batch normalization model suffered a significant loss of approximately 20\%. These findings indicate that the post-add batch normalization technique is less compatible with the quantization process, resulting in a larger degradation in accuracy compared to the pre-add approach. Therefore, careful consideration should be given to the choice of batch normalization technique when aiming to achieve accurate quantization results for RepVGGA0 models. Further investigation is necessary to understand the underlying reasons behind this discrepancy and explore potential solutions or alternative approaches to improve the quantization performance of post-add batch normalization in RepVGG models.
\begin{table}
  \caption{The 8-bit quantization results of post-add BatchNorm and Pre-add BatchNorm RepVGGA0. }
  \begin{center}
  \begin{tabular}{lccr}
    \hline
    \makecell[c]{Model} & Post-add BatchNorm & Pre-add BatchNorm \\
    \hline
    \makecell[c]{W/O REP} & 71.45 & 71.75  \\
    \makecell[c]{REP} & 71.35 & 50.31 \\ 
    \makecell[c]{FP32} & 71.48 & 72.41 \\
    \hline
    \end{tabular}
    \label{BN_exp}
  \end{center}
    \vskip -0.1in
\end{table}

}

\vfill


\begin{thebibliography}{1}
\providecommand{\url}[1]{#1}
\csname url@samestyle\endcsname
\providecommand{\newblock}{\relax}
\providecommand{\bibinfo}[2]{#2}
\providecommand{\BIBentrySTDinterwordspacing}{\spaceskip=0pt\relax}
\providecommand{\BIBentryALTinterwordstretchfactor}{4}
\providecommand{\BIBentryALTinterwordspacing}{\spaceskip=\fontdimen2\font plus
\BIBentryALTinterwordstretchfactor\fontdimen3\font minus
  \fontdimen4\font\relax}
\providecommand{\BIBforeignlanguage}[2]{{%
\expandafter\ifx\csname l@#1\endcsname\relax
\typeout{** WARNING: IEEEtran.bst: No hyphenation pattern has been}%
\typeout{** loaded for the language `#1'. Using the pattern for}%
\typeout{** the default language instead.}%
\else
\language=\csname l@#1\endcsname
\fi
#2}}
\providecommand{\BIBdecl}{\relax}
\BIBdecl

\bibitem{guo20223d}
J.~Guo, J.~Liu, and D.~Xu, ``3d-pruning: A model compression framework for
  efficient 3d action recognition,'' \emph{IEEE Transactions on Circuits and
  Systems for Video Technology}, vol.~32, no.~12, pp. 8717--8729, 2022.

\bibitem{guo2020model}
J.~Guo, W.~Zhang, W.~Ouyang, and D.~Xu, ``Model compression using progressive
  channel pruning,'' \emph{IEEE Transactions on Circuits and Systems for Video
  Technology}, vol.~31, no.~3, pp. 1114--1124, 2020.

\bibitem{zhou2017tensor}
P.~Zhou, C.~Lu, Z.~Lin, and C.~Zhang, ``Tensor factorization for low-rank
  tensor completion,'' \emph{IEEE Transactions on Image Processing}, vol.~27,
  no.~3, pp. 1152--1163, 2017.

\bibitem{zhen2022towards}
P.~Zhen, X.~Yan, W.~Wang, T.~Hou, H.~Wei, and H.-B. Chen, ``Towards compact
  transformers for end-to-end object detection with decomposed chain tensor
  structure,'' \emph{IEEE Transactions on Circuits and Systems for Video
  Technology}, 2022.

\bibitem{hinton2015distilling}
G.~Hinton, O.~Vinyals, and J.~Dean, ``Distilling the knowledge in a neural
  network,'' \emph{arXiv preprint arXiv:1503.02531}, 2015.

\bibitem{dixon1953processing}
W.~Dixon, ``Processing data for outliers,'' \emph{Biometrics}, vol.~9, no.~1,
  pp. 74--89, 1953.

\end{thebibliography}


\begin{thebibliography}{1}

\bibitem{dixon1953processing}
WJ~Dixon.
\newblock Processing data for outliers.
\newblock {\em Biometrics}, 9(1):74--89, 1953.

\bibitem{guo20223d}
Jinyang Guo, Jiaheng Liu, and Dong Xu.
\newblock 3d-pruning: A model compression framework for efficient 3d action
  recognition.
\newblock {\em IEEE Transactions on Circuits and Systems for Video Technology},
  32(12):8717--8729, 2022.

\bibitem{guo2020model}
Jinyang Guo, Weichen Zhang, Wanli Ouyang, and Dong Xu.
\newblock Model compression using progressive channel pruning.
\newblock {\em IEEE Transactions on Circuits and Systems for Video Technology},
  31(3):1114--1124, 2020.

\bibitem{hinton2015distilling}
Geoffrey Hinton, Oriol Vinyals, and Jeff Dean.
\newblock Distilling the knowledge in a neural network.
\newblock {\em arXiv preprint arXiv:1503.02531}, 2015.

\bibitem{liu2021simulated}
Qi~Liu, Xiaopeng Li, Hui Cao, and Yuntao Wu.
\newblock From simulated to visual data: A robust low-rank tensor completion
  approach using ℓ p-regression for outlier resistance.
\newblock {\em IEEE Transactions on Circuits and Systems for Video Technology},
  32(6):3462--3474, 2021.

\bibitem{xu2022improving}
Weixiang Xu, Fanrong Li, Yingying Jiang, A~Yong, Xiangyu He, Peisong Wang, and
  Jian Cheng.
\newblock Improving extreme low-bit quantization with soft threshold.
\newblock {\em IEEE Transactions on Circuits and Systems for Video Technology},
  2022.

\bibitem{zhen2022towards}
Peining Zhen, Xiaotao Yan, Wei Wang, Tianshu Hou, Hao Wei, and Hai-Bao Chen.
\newblock Towards compact transformers for end-to-end object detection with
  decomposed chain tensor structure.
\newblock {\em IEEE Transactions on Circuits and Systems for Video Technology},
  2022.

\bibitem{zhou2017tensor}
Pan Zhou, Canyi Lu, Zhouchen Lin, and Chao Zhang.
\newblock Tensor factorization for low-rank tensor completion.
\newblock {\em IEEE Transactions on Image Processing}, 27(3):1152--1163, 2017.

\end{thebibliography}


\begin{thebibliography}{1}
  \bibliographystyle{IEEEtran}
  \bibitem{bai2021batchquant}
  Haoping Bai, Meng Cao, Ping Huang, and Jiulong Shan.
  \newblock Batchquant: Quantized-for-all architecture search with robust
    quantizer.
  \newblock {\em Advances in Neural Information Processing Systems},
    34:1074--1085, 2021.
  
  \bibitem{banner:aciq}
  Ron Banner, Yury Nahshan, and Daniel Soudry.
  \newblock Post training 4-bit quantization of convolutional networks for
    rapid-deployment.
  \newblock In {\em Advances in Neural Information Processing Systems}, 2019.
  
  \bibitem{bengio2013estimating}
  Yoshua Bengio, Nicholas L{\'e}onard, and Aaron Courville.
  \newblock Estimating or propagating gradients through stochastic neurons for
    conditional computation.
  \newblock {\em arXiv preprint arXiv:1308.3432}, 2013.
  
  \bibitem{bhalgat2020lsq+}
  Yash Bhalgat, Jinwon Lee, Markus Nagel, Tijmen Blankevoort, and Nojun Kwak.
  \newblock Lsq+: Improving low-bit quantization through learnable offsets and
    better initialization.
  \newblock In {\em Proceedings of the IEEE/CVF Conference on Computer Vision and
    Pattern Recognition Workshops}, pages 696--697, 2020.
  
  \bibitem{choukroun:low}
  Yoni Choukroun, Eli Kravchik, Fan Yang, and Pavel Kisilev.
  \newblock Low-bit quantization of neural networks for efficient inference.
  \newblock In {\em 2019 IEEE/CVF International Conference on Computer Vision
    Workshop (ICCVW)}, pages 3009--3018. IEEE, 2019.
  
  \bibitem{chu2022make}
  Xiangxiang Chu, Liang Li, and Bo~Zhang.
  \newblock Make repvgg greater again: A quantization-aware approach.
  \newblock {\em arXiv preprint arXiv:2212.01593}, 2022.
  
  \bibitem{deng:imagenet}
  Jia Deng, Wei Dong, Richard Socher, Li-Jia Li, Kai Li, and Li~Fei-Fei.
  \newblock Imagenet: A large-scale hierarchical image database.
  \newblock In {\em 2009 IEEE conference on computer vision and pattern
    recognition}, pages 248--255. Ieee, 2009.
  
  \bibitem{ding:acnet}
  Xiaohan Ding, Yuchen Guo, Guiguang Ding, and Jungong Han.
  \newblock Acnet: Strengthening the kernel skeletons for powerful cnn via
    asymmetric convolution blocks.
  \newblock In {\em Proceedings of the IEEE/CVF international conference on
    computer vision}, pages 1911--1920, 2019.
  
  \bibitem{ding:repmlp}
  Xiaohan Ding, Chunlong Xia, Xiangyu Zhang, Xiaojie Chu, Jungong Han, and
    Guiguang Ding.
  \newblock Repmlp: Re-parameterizing convolutions into fully-connected layers
    for image recognition.
  \newblock {\em arXiv preprint arXiv:2105.01883}, 2021.
  
  \bibitem{ding:diverse}
  Xiaohan Ding, Xiangyu Zhang, Jungong Han, and Guiguang Ding.
  \newblock Diverse branch block: Building a convolution as an inception-like
    unit.
  \newblock In {\em Proceedings of the IEEE/CVF Conference on Computer Vision and
    Pattern Recognition}, pages 10886--10895, 2021.
  
  \bibitem{ding:repvgg}
  Xiaohan Ding, Xiangyu Zhang, Ningning Ma, Jungong Han, Guiguang Ding, and Jian
    Sun.
  \newblock Repvgg: Making vgg-style convnets great again.
  \newblock In {\em Proceedings of the IEEE/CVF Conference on Computer Vision and
    Pattern Recognition}, pages 13733--13742, 2021.
  
  \bibitem{ding:re}
  Xiaohan Ding, Honghao Chen, Xiangyu Zhang, Kaiqi Huang, Jungong Han, and
    Guiguang Ding.
  \newblock Re-parameterizing your optimizers rather than architectures.
  \newblock {\em arXiv preprint arXiv:2205.15242}, 2022.
  
  \bibitem{ding:scaling}
  Xiaohan Ding, Xiangyu Zhang, Jungong Han, and Guiguang Ding.
  \newblock Scaling up your kernels to 31x31: Revisiting large kernel design in
    cnns.
  \newblock In {\em Proceedings of the IEEE/CVF Conference on Computer Vision and
    Pattern Recognition}, pages 11963--11975, 2022.
  
  \bibitem{Esser:LEARNED}
  Steven~K. Esser, Jeffrey~L. McKinstry, Deepika Bablani, Rathinakumar Appuswamy,
    and Dharmendra~S. Modha.
  \newblock Learned step size quantization.
  \newblock In {\em International Conference on Learning Representations}, 2020.
  
  \bibitem{fan:training}
  Angela Fan, Pierre Stock, Benjamin Graham, Edouard Grave, R{\'e}mi Gribonval,
    Herve Jegou, and Armand Joulin.
  \newblock Training with quantization noise for extreme model compression.
  \newblock {\em arXiv preprint arXiv:2004.07320}, 2020.
  
  \bibitem{frantar2022optimal}
  Elias Frantar and Dan Alistarh.
  \newblock Optimal brain compression: A framework for accurate post-training
    quantization and pruning.
  \newblock {\em arXiv preprint arXiv:2208.11580}, 2022.
  
  \bibitem{hassibi1993optimal}
  Babak Hassibi, David~G Stork, and Gregory~J Wolff.
  \newblock Optimal brain surgeon and general network pruning.
  \newblock In {\em IEEE international conference on neural networks}, pages
    293--299. IEEE, 1993.
  
  \bibitem{hu2022online}
  Mu~Hu, Junyi Feng, Jiashen Hua, Baisheng Lai, Jianqiang Huang, Xiaojin Gong,
    and Xian-Sheng Hua.
  \newblock Online convolutional re-parameterization.
  \newblock In {\em Proceedings of the IEEE/CVF Conference on Computer Vision and
    Pattern Recognition}, pages 568--577, 2022.
  
  \bibitem{hubara:accurate}
  Itay Hubara, Yury Nahshan, Yair Hanani, Ron Banner, and Daniel Soudry.
  \newblock Accurate post training quantization with small calibration sets.
  \newblock In {\em International Conference on Machine Learning}, pages
    4466--4475. PMLR, 2021.
  
  \bibitem{hwang2014fixed}
  Kyuyeon Hwang and Wonyong Sung.
  \newblock Fixed-point feedforward deep neural network design using weights+ 1,
    0, and- 1.
  \newblock In {\em 2014 IEEE Workshop on Signal Processing Systems (SiPS)},
    pages 1--6. IEEE, 2014.
  
  \bibitem{ioffe2015batch}
  Sergey Ioffe and Christian Szegedy.
  \newblock Batch normalization: Accelerating deep network training by reducing
    internal covariate shift.
  \newblock In {\em International conference on machine learning}, pages
    448--456. PMLR, 2015.
  
  \bibitem{jacob:quantization}
  Benoit Jacob, Skirmantas Kligys, Bo~Chen, Menglong Zhu, Matthew Tang, Andrew
    Howard, Hartwig Adam, and Dmitry Kalenichenko.
  \newblock Quantization and training of neural networks for efficient
    integer-arithmetic-only inference.
  \newblock In {\em Proceedings of the IEEE conference on computer vision and
    pattern recognition}, pages 2704--2713, 2018.
  
  \bibitem{kingma2014adam}
  Diederik~P Kingma and Jimmy Ba.
  \newblock Adam: A method for stochastic optimization.
  \newblock {\em arXiv preprint arXiv:1412.6980}, 2014.
  
  \bibitem{li:brecq}
  Yuhang Li, Ruihao Gong, Xu~Tan, Yang Yang, Peng Hu, Qi~Zhang, Fengwei Yu, Wei
    Wang, and Shi Gu.
  \newblock Brecq: Pushing the limit of post-training quantization by block
    reconstruction.
  \newblock {\em arXiv preprint arXiv:2102.05426}, 2021.
  
  \bibitem{li2022yolov6}
  Chuyi Li, Lulu Li, Hongliang Jiang, Kaiheng Weng, Yifei Geng, Liang Li, Zaidan
    Ke, Qingyuan Li, Meng Cheng, Weiqiang Nie, et~al.
  \newblock Yolov6: A single-stage object detection framework for industrial
    applications.
  \newblock {\em arXiv preprint arXiv:2209.02976}, 2022.
  
  \bibitem{lin2014microsoft}
  Tsung-Yi Lin, Michael Maire, Serge Belongie, James Hays, Pietro Perona, Deva
    Ramanan, Piotr Doll{\'a}r, and C~Lawrence Zitnick.
  \newblock Microsoft coco: Common objects in context.
  \newblock In {\em European conference on computer vision}, pages 740--755.
    Springer, 2014.
  
  \bibitem{nagel:data}
  Markus Nagel, Mart~van Baalen, Tijmen Blankevoort, and Max Welling.
  \newblock Data-free quantization through weight equalization and bias
    correction.
  \newblock In {\em Proceedings of the IEEE/CVF International Conference on
    Computer Vision}, pages 1325--1334, 2019.
  
  \bibitem{nagel:up}
  Markus Nagel, Rana~Ali Amjad, Mart van Baalen, Christos Louizos, and Tijmen
    Blankevoort.
  \newblock Up or down? adaptive rounding for post-training quantization.
  \newblock {\em arXiv preprint arXiv:2004.10568}, 2020.
  
  \bibitem{simonyan2014very}
  Karen Simonyan and Andrew Zisserman.
  \newblock Very deep convolutional networks for large-scale image recognition.
  \newblock {\em arXiv preprint arXiv:1409.1556}, 2014.
  
  \bibitem{vasu:improved}
  Pavan Kumar~Anasosalu Vasu, James Gabriel, Jeff Zhu, Oncel Tuzel, and Anurag
    Ranjan.
  \newblock An improved one millisecond mobile backbone.
  \newblock {\em arXiv preprint arXiv:2206.04040}, 2022.
  
  \bibitem{wang2020towards}
  Peisong Wang, Qiang Chen, Xiangyu He, and Jian Cheng.
  \newblock Towards accurate post-training network quantization via bit-split and
    stitching.
  \newblock In {\em International Conference on Machine Learning}, pages
    9847--9856. PMLR, 2020.
  
  \bibitem{wang2022yolov7}
  Chien-Yao Wang, Alexey Bochkovskiy, and Hong-Yuan~Mark Liao.
  \newblock Yolov7: Trainable bag-of-freebies sets new state-of-the-art for
    real-time object detectors.
  \newblock {\em arXiv preprint arXiv:2207.02696}, 2022.
  
  \bibitem{wang:edge}
  Yan Wang.
  \newblock Edge-enhanced feature distillation network for efficient
    super-resolution.
  \newblock In {\em Proceedings of the IEEE/CVF Conference on Computer Vision and
    Pattern Recognition}, pages 777--785, 2022.
  
  \bibitem{wei2022qdrop}
  Xiuying Wei, Ruihao Gong, Yuhang Li, Xianglong Liu, and Fengwei Yu.
  \newblock Qdrop: Randomly dropping quantization for extremely low-bit
    post-training quantization.
  \newblock {\em arXiv preprint arXiv:2203.05740}, 2022.
  
  \bibitem{wu2020easyquant}
  Di~Wu, Qi~Tang, Yongle Zhao, Ming Zhang, Ying Fu, and Debing Zhang.
  \newblock Easyquant: Post-training quantization via scale optimization.
  \newblock {\em arXiv preprint arXiv:2006.16669}, 2020.
  
  \bibitem{guo2020model}
  Jinyang Guo, Weichen Zhang, Wanli Ouyang, and Dong Xu.
  \newblock Model compression using progressive channel pruning.
  \newblock {\em IEEE Transactions on Circuits and Systems for Video Technology},
    31(3):1114--1124, 2020.

  \bibitem{liu2021simulated}
  Qi~Liu, Xiaopeng Li, Hui Cao, and Yuntao Wu.
  \newblock From simulated to visual data: A robust low-rank tensor completion
    approach using $\ell$ p-regression for outlier resistance.
  \newblock {\em IEEE Transactions on Circuits and Systems for Video Technology},
    32(6):3462--3474, 2021.
    
  \bibitem{guo20223d}
  Jinyang Guo, Jiaheng Liu, and Dong Xu.
  \newblock 3d-pruning: A model compression framework for efficient 3d action
    recognition.
  \newblock {\em IEEE Transactions on Circuits and Systems for Video Technology},
    32(12):8717--8729, 2022.

  \bibitem{zagoruyko:diracnets}
  Sergey Zagoruyko and Nikos Komodakis.
  \newblock Diracnets: Training very deep neural networks without
    skip-connections.
  \newblock {\em arXiv preprint arXiv:1706.00388}, 2017.
  
  \bibitem{zhang:edge}
  Xindong Zhang, Hui Zeng, and Lei Zhang.
  \newblock Edge-oriented convolution block for real-time super resolution on
    mobile devices.
  \newblock In {\em Proceedings of the 29th ACM International Conference on
    Multimedia}, pages 4034--4043, 2021.
  
  \bibitem{zhang2022opt}
  Zhang, Susan and Roller, Stephen and Goyal, Naman and Artetxe, Mikel and Chen, Moya and Chen, Shuohui and Dewan, Christopher and Diab, Mona and Li, Xian and Lin, Xi Victoria and others
  \newblock Opt: Open pre-trained transformer language models
  \newblock {\em arXiv preprint arXiv:2205.01068}, 2022.

  \bibitem{scao2022bloom}
  Scao, Teven Le and Fan, Angela and Akiki, Christopher and Pavlick, Ellie and Ili{\'c}, Suzana and Hesslow, Daniel and Castagn{\'e}, Roman and Luccioni, Alexandra Sasha and Yvon, Fran{\c{c}}ois and Gall{\'e}, Matthias and others
  \newblock Bloom: A 176b-parameter open-access multilingual language model
  \newblock {\em arXiv preprint arXiv:2211.05100}, 2022.
  
  \bibitem{xiao2022smoothquant}
  Xiao, Guangxuan and Lin, Ji and Seznec, Mickael and Demouth, Julien and Han, Song
  \newblock Smoothquant: Accurate and efficient post-training quantization for large language models
  \newblock {\em arXiv preprint arXiv:2211.10438}, 2022.

  \bibitem{wei2022outlier}
  Wei, Xiuying and Zhang, Yunchen and Zhang, Xiangguo and Gong, Ruihao and Zhang, Shanghang and Zhang, Qi and Yu, Fengwei and Liu, Xianglong
  \newblock Outlier suppression: Pushing the limit of low-bit transformer language models
  \newblock In {\em arXiv preprint arXiv:2209.13325}, 2022.

  \bibitem{zhen2022towards}
  Peining Zhen, Xiaotao Yan, Wei Wang, Tianshu Hou, Hao Wei, and Hai-Bao Chen.
  \newblock Towards compact transformers for end-to-end object detection with
    decomposed chain tensor structure.
  \newblock {\em IEEE Transactions on Circuits and Systems for Video Technology},
    2022.

  \bibitem{xu2022improving}
  Weixiang Xu, Fanrong Li, Yingying Jiang, A~Yong, Xiangyu He, Peisong Wang, and
    Jian Cheng.
  \newblock Improving extreme low-bit quantization with soft threshold.
  \newblock {\em IEEE Transactions on Circuits and Systems for Video Technology},
    2022.

  \bibitem{wei2022outlier}
  Wei, Xiuying and Zhang, Yunchen and Zhang, Xiangguo and Gong, Ruihao and Zhang, Shanghang and Zhang, Qi and Yu, Fengwei and Liu, Xianglong
  \newblock Outlier suppression: Pushing the limit of low-bit transformer language models
  \newblock In {\em arXiv preprint arXiv:2209.13325}, 2022.

  \bibitem{hinton2015distilling}
  Geoffrey Hinton, Oriol Vinyals, and Jeff Dean.
  \newblock Distilling the knowledge in a neural network.
  \newblock {\em arXiv preprint arXiv:1503.02531}, 2015.

  \bibitem{dixon1953processing}
  WJ~Dixon.
  \newblock Processing data for outliers.
  \newblock {\em Biometrics}, 9(1):74--89, 1953.

  \bibitem{zhou2017tensor}
Pan Zhou, Canyi Lu, Zhouchen Lin, and Chao Zhang.
\newblock Tensor factorization for low-rank tensor completion.
\newblock {\em IEEE Transactions on Image Processing}, 27(3):1152--1163, 2017.

  \end{thebibliography}
\end{document}